\DeclareRobustCommand\onedot{\futurelet\@let@token\@onedot}
\def\@onedot{\ifx\@let@token.\else.\null\fi\xspace}
\def\eg{\emph{e.g}\onedot} 
\def\ie{\emph{i.e}\onedot} 
\def\cf{\emph{c.f}\onedot}
\def\etal{\emph{et al}\onedot}
\def\Vec#1{{\boldsymbol{#1}}}
\def\Mat#1{{\boldsymbol{#1}}}
\def\SPD#1{\mathcal{S}_{++}^{#1}}
\newcommand{\tr}{\mathop{\rm  Tr}\nolimits}
\newtheorem{theorem}{Theorem}
\newtheorem{lemma}{Lemma} 
\newtheorem{proposition}{Proposition}
\newtheorem{definition}{Definition}
\begin{document}

\title{Sparse Coding on Symmetric Positive Definite Manifolds using Bregman Divergences}

\author
  {
  Mehrtash~Harandi,~
  Richard~Hartley,~
  Brian~Lovell,~
  and~Conrad~Sanderson
  \thanks
    {
    Mehrtash~Harandi and Richard~Hartley are with the College of Engineering and Computer Science, Australian National University,
    and NICTA, Australia.
    Brian~Lovell is with the University of Queensland, Australia.
    Conrad~Sanderson is with NICTA, Australia, and the Queensland University of Technology, Australia.
    }
  \thanks{Contact e-mail: mehrtash.harandi@nicta.com.au}
  \thanks{Acknowledgements: NICTA is funded by the Australian Government as represented by the Department of Broadband, Communications and the Digital Economy and the Australian Research Council (ARC) through the ICT Centre of Excellence program.}
  }

\markboth{IEEE Transactions on Neural Networks and Learning Systems}%
{Harandi \etal : Sparse Coding on Symmetric Positive Definite Manifolds}

\IEEEcompsoctitleabstractindextext{
\begin{abstract}

This paper introduces sparse coding and dictionary learning for Symmetric Positive Definite (SPD) matrices,
which are often used in machine learning, computer vision and related areas.
Unlike traditional sparse coding schemes that work in vector spaces,
in this paper we discuss how SPD matrices can be described by sparse combination of dictionary atoms,
where the atoms are also SPD matrices.
We propose to seek sparse coding by embedding the space of SPD matrices into Hilbert spaces through two types of Bregman matrix divergences.
This not only leads to an efficient way of performing sparse coding, but also an online and iterative scheme for dictionary learning.
We apply the proposed methods to several computer vision tasks where images are represented by region covariance matrices.
Our proposed algorithms outperform  state-of-the-art methods on a wide range of classification tasks,
including face recognition, action recognition, material classification and texture categorization.

\end{abstract}

\begin{IEEEkeywords}
Riemannian geometry, Bregman divergences, kernel methods, sparse coding, dictionary learning.
\end{IEEEkeywords}
}

\maketitle
\IEEEdisplaynotcompsoctitleabstractindextext

\IEEEpeerreviewmaketitle


\section{Introduction}
\label{sec:intro}

Sparsity is a popular concept in signal processing~\cite{ELAD_SR_BOOK_2010,Olshausen_1996_Nature,Wright_2009_PAMI}
and stipulates that natural signals like images can be efficiently described using only a few non-zero coefficients of a suitable 
basis (\ie dictionary)~\cite{ELAD_SR_BOOK_2010}. This paper introduces techniques to perform sparse coding on Symmetric Positive Definite (SPD) 
matrices. More specifically,  unlike traditional sparse coding schemes that work on vectors, 
in this paper we discuss how SPD matrices can be described by sparse combination of dictionary atoms,
where the atoms are also  SPD matrices.

Our motivation stems from pervasive role of SPD matrices in machine learning, computer vision and related areas. 
For example, SPD matrices have been used in medical imaging,
texture classification~\cite{Tuzel_ECCV_2006,Harandi_ECCV_2012},
action recognition and gesture categorization~\cite{Sanin_WACV_2013},
as well as face recognition~\cite{Pang_TCSVT_2008,Harandi_ECCV_2012}.

Extending sparse coding methods to SPD matrices is not trivial, since such matrices form the interior of the positive semidefinite cone.
In other words, simply vectorizing SPD matrices and employing Euclidean geometry (\eg, Euclidean norms) does not lead to accurate representations~\cite{Pennec_IJCV_2006,Tuzel_PAMI_2008,Sadeep_CVPR_2013}.
To overcome the drawbacks of Euclidean structure, SPD matrices are usually analyzed using
a Riemannian structure, known as SPD or tensor manifold~\cite{Pennec_IJCV_2006}.
This is where the difficulties arise. On one hand, taking into account the Riemannian geometry is important as discussed in 
various recent studies~\cite{Pennec_IJCV_2006,Tuzel_PAMI_2008,Harandi_ECCV_2012,Sadeep_CVPR_2013}. On the other hand,
the non-linearity of the Riemannian structure is a hindrance and demands specialized machineries.

Generally speaking, two approaches to handle the non-linearity of Riemannian manifolds are
(i)~locally flattening them via tangent spaces~\cite{Tuzel_PAMI_2008,Sanin_WACV_2013},
and
(ii)~embedding them in higher dimensional Hilbert spaces~\cite{Harandi_ECCV_2012,Caseiro_ECCV_2012,Sadeep_CVPR_2013}. 
The latter has recently received a surge of attention, since embedding into Reproducing Kernel Hilbert Space (RKHS) through kernel 
methods~\cite{Shawe-Taylor2004book}
is a well-established and principled approach in machine learning. 
However, embedding SPD manifolds into RKHS requires \mbox{non-trivial} kernel functions defined on such manifolds,
which, according to Mercer's theorem~\cite{Shawe-Taylor2004book}, must be positive definite.

The contributions in this paper%
\footnote{%
  This paper is a thoroughly extended and revised version of our earlier work~\cite{Harandi_ECCV_2012}. 
  In addition to providing more insights on the proposed methods,
  we extend our primary ideas by studying and devising coding and dictionary learning methods in the RKHS induced by the Jeffrey kernel.
  We also devise an efficient algorithm to obtain sparse codes in our RKHS-based formulation.}
are four-fold: 
\begin{itemize}

\item[(i)]
We propose sparse coding and dictionary learning algorithms for data points (matrices) on SPD manifolds, by embedding the 
manifolds into RKHS. This is advantageous, as linear geometry applies in RKHS.

\item[(ii)]
For the embedding we propose kernels derived from two Bregman matrix divergences, namely the Stein and Jeffrey divergences.
While the kernel property of the Jeffrey divergence was discovered in 2005~\cite{Hein_2005}, to our best knowledge, 
this is one of the first attempts to benefit from this kernel for analyzing SPD matrices.

\item[(iii)]
For both kernels, we devise a closed-form solution for updating an SPD dictionary atom by atom.

\item[(iv)]
We apply the proposed methods to
several computer vision tasks where images are represented by region covariance matrices.  
Our proposed algorithms outperform  state-of-the-art methods
on several classification tasks,
including face recognition, texture classification and action recognition.
\end{itemize}

\section{Related Work}
\label{sec:related_work}

In computer vision, SPD matrices are used in various applications, including  
pedestrian detection~\cite{Tuzel_PAMI_2008},
texture classification~\cite{Tuzel_ECCV_2006,Harandi_ECCV_2012},
object recognition~\cite{Sadeep_CVPR_2013},
object tracking~\cite{Tuzel_ECCV_2006},
action recognition~\cite{Guo_TIP13,Sanin_WACV_2013}
and
face recognition~\cite{Pang_TCSVT_2008,Harandi_ECCV_2012}.
This is mainly because Region Covariance Descriptors (RCM)~\cite{Tuzel_ECCV_2006}, which encode second order statistics, 
are straightforward and relatively robust descriptors for images and videos.
Moreover, structure tensors, which are by nature SPD matrices, encode important image features
(\eg, texture and motion in optical flow estimation and motion segmentation).
Lastly, diffusion tensors that naturally arise in medical imaging are described by $3\times3$ SPD matrices~\cite{Pennec_IJCV_2006}.

Our interest in this paper is to perform sparse coding and dictionary learning on SPD matrices,
since modern systems in various applications benefit from the notion of sparse coding.
However, while significant steps have been taken to develop the theory of the sparse coding and dictionary learning in Euclidean spaces,
only a handful of studies tackle similar problems for SPD matrices~\cite{Sra_ECML_2011,Guo_TIP13,Vemuri_ICML_2013}.

Sra and Cherian~\cite{Sra_ECML_2011}  proposed to measure the similarity between SPD matrices using the Frobenius norm 
and formulated the sparse coding and dictionary learning problems accordingly.
While solving the problems using purely Euclidean structure of SPD matrices is computationally attractive,
it neglects the Riemannian structure of SPD manifolds.

A somehow similar and straightforward idea is to flatten an SPD manifold using a fixed tangent space.
Sparse coding by embedding manifolds into their identity tangent spaces, which identifies the Lie algebra of SPD manifolds, is considered in~\cite{Yuan_ACCV_2010,Guo_TIP13,Faraki_2015}.
Though such embedding considerably simplifies the sparse coding formulation, the pair-wise distances are no longer adequate, which can affect discrimination performance.
This is exacerbated for manifolds with negative curvature (\eg SPD manifolds), since pair-wise distances are not even directly bounded%
\footnote{For manifolds with positive curvature,
pair-wise distances on tangent spaces are greater or equal to true geodesic distances on the manifold according to Toponogov's theorem~\cite{Lee_Manifold}.
Such property does not hold for manifolds with negative curvature.}.

A more involved approach to learn a Riemannian dictionary is proposed very recently by Ho \etal~\cite{Vemuri_ICML_2013}.
The underlying idea is to exploit the tangent bundle of the manifold.
To avoid a trivial solution in this approach, an affine constraint has to be added to the general formulation~\cite{Vemuri_ICML_2013}. 
While this results in independency to the origin, it no longer addresses the original problem. 
Furthermore, switching back and forth to tangent spaces of SPD manifolds 
(as required by this formulation) can be computationally very demanding for high dimensional manifolds.

Sivalingam \etal~\cite{Sivalingam_ICCV_2011,TSC_PAMI_2014} proposed Tensor Sparse Coding (TSC) 
which utilizes the Burg divergence (an asymmetric type of Bregman divergence)
to perform sparse coding and dictionary learning on SPD manifolds.
To this end, they show that when the Burg divergence is used as the proximity measure, the problem of sparse coding becomes a MAXDET problem which is convex
and hence can be solved by interior point algorithms~\cite{Sivalingam_ICCV_2011}.
As for dictionary learning, two methods were proposed in~\cite{Sivalingam_ICCV_2011,TSC_PAMI_2014}.
In the first method, a gradient descent approach was utilized to update dictionary atoms one by one.
Inspired by the K-SVD algorithm~\cite{Aharon_2006},
the second method updates dictionary atoms by minimizing a form of residual error over training data,
which speeds up the process of dictionary learning.
Besides the asymmetric nature of the Burg divergence, we note that the computational complexity of the TSC algorithm is high, especially for high-dimensional SPD manifolds.  

\section{Preliminaries}
\label{sec:preliminaries}

This section provides an overview on Riemannian geometry of SPD manifolds, Bregman divergences and their properties.
It provides  the groundwork for techniques described in following sections. 
Throughout the paper, bold capital letters denote matrices (\eg, $\Mat{X}$) and bold lower-case letters denote column vectors (\eg, $\Vec{x}$).
Notation $x_i$ is used to indicate element at position $i$ of vector $\Vec{x}$.
$\mathbf{I}_n$ is the $n \times n$ identity matrix.
$\Vert \Vec{x} \Vert_2 = \sqrt{\Vec{x}^T\Vec{x}}$ and $\Vert \Vec{x} \Vert_1 = \sum\nolimits_i|x_i|$
denote the $\ell_2$ and $\ell_1$ norms, respectively, with $T$ indicating matrix transpose.
$\Vert \Mat{X} \Vert_F = \sqrt{\tr \big(\Mat{X}^T\Mat{X}\big)}$ designates the Frobenius norm.
$\mathrm{GL}(n)$ denotes the  general linear group, the group of real invertible $n \times n$ matrices.
$\mathrm{Sym}(n)$ is the space of real $n \times n$ symmetric matrices.
%

\subsection{Riemannian Geometry of SPD Manifolds}

An $n \times n$, real SPD matrix $\Mat{X}$ has the property that $\Vec{v}^T\Mat{X}\Vec{v} > 0$ for all non-zero $\Vec{v} \in \mathbb{R}^n$. 
The space of $n \times n$ SPD matrices, denoted by $\SPD{n}$,
is not a vector space since multiplying an SPD matrix by a negative scalar results in a matrix which does not belong to $\SPD{n}$.
Instead, $\SPD{n}$ forms the interior of a convex cone in the $n(n+1)/2$-dimensional Euclidean space.
The $\SPD{n}$ space is mostly studied when endowed with a Riemannian metric and thus forms a Riemannian manifold~\cite{Pennec_IJCV_2006}. 

On a Riemannian manifold, a natural way to measure nearness is through the notion of geodesics, 
which are curves analogous to straight lines in $\mathbb{R}^n$.
The geodesic distance is thus defined as the length of the shortest curve connecting the two points.
The tangent space at a point {\small $\Mat{P}$} on the manifold, $T_{\Mat{P}}{\mathcal{M}}$, is a vector space that consists of the
tangent (\ie, velocity) vectors of all possible curves passing through {$\Mat{P}$}.

Two operators, namely the exponential map {$\exp_{\Mat{P}}(\cdot):T_{\Mat{P}}{\mathcal{M}} \rightarrow \mathcal{M}$}
and the logarithm map {$\log_{\Mat{P}}(\cdot)=\exp^{-1}_{\Mat{P}}(\cdot):\mathcal{M} \rightarrow T_{\Mat{P}}{\mathcal{M}}$},
are defined over Riemannian manifolds to switch between the manifold and tangent space at {\small $\Mat{P}$}.
The exponential operator maps a tangent vector {$\Delta$} to a point {$\Mat{X}$} on the manifold.
The property of the exponential map ensures that the length of {$\Delta$}
becomes equal to the geodesic distance between {$\Mat{X}$} and {$\Mat{P}$}.
The logarithm map is the inverse of the exponential map,
and maps a point on the manifold to the tangent space {$T_{\Mat{P}}$}.
The exponential and logarithm maps vary as point {$\Mat{P}$} moves along the manifold.

On the SPD manifold, the Affine Invariant Riemannian Metric (AIRM)~\cite{Pennec_IJCV_2006},
defined as:
\begin{align}
	\langle \Mat{V}, \Mat{W} \rangle_\Mat{P} &\triangleq  \langle \Mat{P}^{-1/2}\Mat{V}\Mat{P}^{-1/2}, \Mat{P}^{-1/2}\Mat{W}\Mat{P}^{-1/2} \rangle
	\notag \\
	&= \tr \left( \Mat{P}^{-1} \Mat{V} \Mat{P}^{-1} \Mat{W}\right)\;,
	\label{eqn:AIRM_equ}
\end{align}

\noindent
for $\Mat{P} \in \SPD{n}$ and $\Mat{V},\Mat{W} \in T_{\Mat{P}}{\mathcal{M}}$, induces the following geodesic distance between points
$\Mat{X}$ and $\Mat{Y}$:

\begin{equation}
\delta_R(\Mat{X},\Mat{Y}) = \|\log(\Mat{X}^{-1/2}\Mat{Y}\Mat{X}^{-1/2})\|_F\;,
\label{eqn:geodesic_distance}
\end{equation}
with $\log(\cdot)$ being the principal matrix logarithm.
%

\subsection{Bregman Divergences}
\label{sec:subsec_bregman}

In this part we introduce two divergences derived from Bregman matrix divergence, namely the Jeffrey and Stein divergences.
We discuss their properties and establish their relations to AIRM.
This provides motivation and grounding for our formulation of sparse coding and dictionary learning using the aforementioned divergences.

\begin{definition} \label{def:bregman_divergence}
		Let $\zeta : \mathcal{S}_{++}^{n} \rightarrow \mathbb{R}$ be a strictly
		convex and differentiable function defined on the symmetric positive cone $\SPD{n}$. The Bregman 
		matrix divergence 	$d_\zeta : \SPD{n} \times \SPD{n} \rightarrow [0,\infty)$ is defined as
		\begin{equation}
			\label{eqn:Bregman_Div}
			d_\zeta(\Mat{X},\Mat{Y}) = \zeta(\Mat{X}) - \zeta(\Mat{Y}) - \langle \nabla_{\zeta} (\Mat{Y}) , \Mat{X} - \Mat{Y} \rangle \; ,
		\end{equation}
		where
		\mbox{\small{$\langle \Mat{X} , \Mat{Y} \rangle \mbox{=} \tr \left( \Mat{X}^T \Mat{Y} \right) $}},
		and	$ \nabla_{\zeta}(\Mat{Y})$	 represents the gradient of $\zeta$ evaluated at $\Mat{Y}$.
\end{definition}

Loosely speaking, the Bregman divergence between $\Mat{X}$ and $\Mat{Y}$ can be understood as 
the distance between the function $\zeta(\Mat{X})$ and its first order Taylor approximation constructed at $\Mat{Y}$. 	
The Bregman divergence is asymmetric, non-negative, and definite (\ie , $d_\zeta(\Mat{X},\Mat{Y}) = 0, \; \text{iff}~ \Mat{X} =\Mat{Y}$).
While the Bregman divergence enjoys a variety of useful properties~\cite{Kulis:2009:JMLR},
its asymmetric behavior can be a hindrance (\eg, in SVMs, the kernels need to be symmetric, hence asymmetric divergences 
cannot be used to devise kernels).
In this paper we are interested in two types of symmetrized Bregman divergences, namely the {\it Jeffrey}  and the {\it Stein} divergences.
	 
\begin{definition} \label{def:kl_divergence}
	The $J$~divergence (also known as Jeffrey or symmetric KL divergence) is obtained from the Bregman divergence of 	
	Eqn.~\eqref{eqn:Bregman_Div} by using $\zeta(\Mat{X}) = -\log |\Mat{X} |$ as the seed function where $|\cdot|$ denotes determinant:
	\begin{align}
		J(\Mat{X},\Mat{Y}) &\triangleq \frac{1}{2} d_\zeta(\Mat{X},\Mat{Y}) + \frac{1}{2}d_\zeta(\Mat{Y},\Mat{X}) \notag\\
		&= \frac{1}{2}\tr (\Mat{X}^{-1}\Mat{Y}) - \frac{1}{2}\log |\Mat{X}^{-1}\Mat{Y}| \notag\\
		&+   \frac{1}{2}\tr (\Mat{Y}^{-1}\Mat{X}) - \frac{1}{2}\log |\Mat{Y}^{-1}\Mat{X}| -n \notag\\			  
		&= \frac{1}{2}\tr (\Mat{X}^{-1}\Mat{Y}) + \frac{1}{2}\tr (\Mat{Y}^{-1}\Mat{X}) -n \;.
		\label{eqn:KL_Div}
	\end{align}

\end{definition}
	
\begin{definition} \label{def:stein_divergence}
	The Stein or $S$~divergence (also known as Jensen-Bregman LogDet divergence~\cite{Cherian_PAMI12})
	 is obtained from the Bregman divergence of Eqn.~\eqref{eqn:Bregman_Div}
	by again using $\zeta(\Mat{X}) = - \log |\Mat{X} |$ as the seed function but through {\it Jensen-Shannon} symmetrization:
	\begin{align}
	   	S(\Mat{X},\Mat{Y}) &\triangleq \frac{1}{2}	
   		d_{\zeta}\left( \Mat{X},\frac{\Mat{X}+\Mat{Y}}{2} \right) +
	   	\frac{1}{2} d_{\zeta}\left( \Mat{Y},\frac{\Mat{X}+\Mat{Y}}{2} \right) \notag\\
    	&= \log \bigg| \frac{\Mat{X}+\Mat{Y}}{2} \bigg|
    	- \frac{1}{2}  \log | \Mat{X}\Mat{Y} |  \;. 
    	\label{eqn:Stein_Div}
	\end{align}%
\end{definition}

\subsection{Properties of $J$ and $S$ divergences}
\label{sec:subsec_bregman_prop}

The $J$ and $S$ divergences have a variety of properties which are akin to those of AIRM.
The pertinent properties which inspired us to seek sparse coding on $\SPD{n}$ using such divergences are:

\begin{itemize}

\item
Both the J and S divergences as well as AIRM (as its name implies) are invariant to affine transformation~\cite{Pennec_IJCV_2006,Sra_NIPS_2012,Vemuri_CVPR2004}.

\item
the length of curves under AIRM and S divergence is equal up to a scale~\cite{Harandi_ECCV_2014}.

\item
the geometric mean of two tensors under AIRM coincides with the geometric mean under J and S divergences
(see~\cite{Sra_NIPS_2012} for the S divergence and the appendix for the proof on the J divergence). 

\end{itemize} 

The key message worth noting is the Hilbert space embedding property of the $J$ and $S$ divergences,
which does not hold for AIRM~\cite{Harandi_ECCV_2012,Sadeep_CVPR_2013}.

\subsubsection*{\textbf{Hilbert space embedding (SPD kernels)}}

Both $J$ and $S$ divergences admit a Hilbert space embedding in the form of a Radial Basis Function (RBF) kernel~\cite{Shawe-Taylor2004book}. 
More specifically, for the $J$-divergence it has been shown that the kernel 
\begin{equation}
	k_J(\Mat{X},\Mat{Y}) = \exp \{-\beta J(\Mat{X},\Mat{Y}) \},
	\label{eqn:kernel_j_div}
\end{equation}
is conditionally positive definite~\cite{Hein_2005}. Formally:
\begin{definition}[Conditionally Positive Definite Kernels]
Let $\mathcal{X}$ be a nonempty set. A symmetric function $\psi: \mathcal{X} \times \mathcal{X} \to \mathbb{R}$ is a conditionally
positive definite kernel on $\mathcal{X}$ if and only if
$\sum_{i,j=1}^nc_ic_jk(x_i,x_j) \geq 0$ for any $n \in \mathbb{N}$, $x_i \in \mathcal{X}$ and $c_i \in \mathbb{R}$ with $\sum_{i=1}^n c_i = 0$.
\end{definition}

The relations between positive definite (\emph{pd}) and conditionally positive definite (\emph{cpd}) kernels
are studied by Berg \etal~\cite{Berg:1984} and Sch\"{o}lkopf~\cite{Scholkopf:NIPS:2001}.
An important property of \emph{cpd} kernels is 
\begin{proposition}
For a kernel algorithm that is translation invariant, \emph{cpd} kernels can be used instead of \emph{pd} kernels~\cite{Scholkopf:NIPS:2001}.
\end{proposition}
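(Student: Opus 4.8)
The plan is to exhibit, from the given conditionally positive definite (\emph{cpd}) kernel $k$, an associated positive definite (\emph{pd}) kernel that a translation-invariant algorithm cannot distinguish from $k$. Concretely, I would fix an arbitrary reference point $x_0 \in \mathcal{X}$ and define
\[
\hat{k}(x,y) = k(x,y) - k(x,x_0) - k(y,x_0) + k(x_0,x_0).
\]
The two facts to establish are (a) that $\hat{k}$ is genuinely \emph{pd}, and (b) that $\hat{k}$ differs from $k$ only by a term of the form $f(x)+f(y)+b$ that a translation-invariant algorithm ignores. Together these show that running the algorithm on the \emph{cpd} kernel $k$ is equivalent to running it on the \emph{pd} kernel $\hat k$, which is exactly the claim that \emph{cpd} kernels may be substituted for \emph{pd} ones.

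For (a) I would verify positive definiteness directly. Given arbitrary $x_1,\dots,x_n$ and $c_1,\dots,c_n \in \mathbb{R}$, set $C = \sum_{i=1}^n c_i$ and augment the system with the reference point $x_0$ carrying coefficient $c_0 = -C$, so that $\sum_{i=0}^n c_i = 0$. Expanding $\sum_{i,j=0}^n c_i c_j k(x_i,x_j)$ and using the symmetry of $k$, the terms involving $x_0$ collapse into $-2C\sum_i c_i k(x_i,x_0) + C^2 k(x_0,x_0)$, and the resulting expression is seen to coincide term-by-term with $\sum_{i,j=1}^n c_i c_j \hat{k}(x_i,x_j)$. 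Since the augmented coefficients sum to zero, the \emph{cpd} hypothesis on $k$ makes this augmented quadratic form nonnegative; hence $\sum_{i,j} c_i c_j \hat{k}(x_i,x_j) \geq 0$ for \emph{all} coefficient vectors, which is precisely positive definiteness of $\hat{k}$.

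For (b), writing $f(x) = -k(x,x_0)$ and $b = k(x_0,x_0)$ yields $\hat{k}(x,y) = k(x,y) + f(x) + f(y) + b$; that is, $\hat k$ arises from $k$ by a translation in feature space, since adding a fixed offset to every feature vector induces exactly such a change of inner product. By the assumed translation invariance, the algorithm's output is unchanged under $k \mapsto k + f(x) + f(y) + b$, so it returns identical results for $k$ and for the \emph{pd} kernel $\hat k$. Equivalently, the kernel enters the algorithm only through quadratic forms $\sum_{i,j} c_i c_j k(x_i,x_j)$ restricted to the subspace $\sum_i c_i = 0$, on which $\hat k$ and $k$ agree because every correction term carries the factor $C = \sum_i c_i = 0$; on that subspace the form is nonnegative by the \emph{cpd} property, so $k$ behaves exactly as a \emph{pd} kernel would.

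I expect the main obstacle to be not the positive-definiteness computation of step (a), which is a routine augment-and-reindex argument, but rather pinning down the precise meaning of a ``translation-invariant kernel algorithm'' so that step (b) is rigorous. The cleanest formulation is that the algorithm's output is invariant under $k \mapsto k + f(x) + f(y) + b$ for any $f$ and constant $b$; once this is adopted — it holds for support vector machines with an unconstrained bias, for kernel PCA after centering, and more generally for any method whose dual is confined to $\sum_i c_i = 0$ — the proposition follows immediately by combining (a) and (b).
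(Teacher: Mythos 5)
Your argument is correct: the centering construction $\hat{k}(x,y)=k(x,y)-k(x,x_0)-k(y,x_0)+k(x_0,x_0)$, the augment-with-$c_0=-\sum_i c_i$ computation showing $\hat k$ is \emph{pd}, and the observation that a translation-invariant algorithm sees the kernel only through quadratic forms on the subspace $\sum_i c_i=0$ (where $k$ and $\hat k$ agree) is precisely the standard proof of this result from Sch\"{o}lkopf's cited paper; the paper itself offers no proof and simply cites that reference. The only loose phrase is the claim that $k\mapsto k+f(x)+f(y)+b$ is ``exactly'' a translation of the feature map --- for arbitrary $f$ it need not be realizable as $\Phi(x)\mapsto\Phi(x)+v$ --- but your fallback formulation via the $\sum_i c_i=0$ subspace makes the step rigorous, so nothing is missing.
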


This property relaxes the requirement of having \emph{pd} kernels for certain types of kernel algorithms. 
For example, in SVMs, a \emph{cpd} kernel can be seamlessly used instead of a \emph{pd} kernel.
We note that in~\cite{Moreno_2003} the kernel $k_J(\cdot,\cdot)$ was claimed to be positive definite. 
However, a formal proof is not available according to our best knowledge. 
For the Stein divergence, the kernel 
\begin{equation}
	k_S(\Mat{X},\Mat{Y}) = \exp \{ -\beta S(\Mat{X},\Mat{Y}) \},
	\label{eqn:kernel_s_div}
\end{equation}
is guaranteed to be positive definite for 
\begin{equation}
    \beta \in \left \{ \frac{1}{2},\frac{2}{2}, \cdots, \frac{n-1}{2} \right \}
    \cup \left \{\tau \in \mathbb{R}: \tau > \frac{1}{2}(n-1) \right \}\;.
    \label{eqn:Stein_Krnl2}
\end{equation}%
Interested reader is referred to~\cite{Sra_NIPS_2012} for further details. For values of $\beta$ outside of the above set,
it is possible to convert a pseudo kernel into a true kernel,
as discussed for example in~\cite{Similarity_JMLR_2009}.

\section{Sparse Coding}
\label{sec:sparse_coding}

Given a query $\Vec{x} \in \mathbb{R}^d$,
sparse coding in vector spaces optimizes the objective function
\begin{equation}
	l_E(\Vec{x},\mathbb{D}) \triangleq 
    \underset{\Vec{y}}{\min} \:
    \Bigl\| \Vec{x}- \sum\nolimits_{j=1}^{N} y_j \Vec{d}_j
    \Bigr\|_2^2
    + \rm{Sp}( \Vec{y} ),
    \label{eqn:euc_sparse_coding}
\end{equation}
with \mbox{\small$\mathbb{D}_{d \times N} = \left[ \Vec{d}_1 | \Vec{d}_2| \cdots | \Vec{d}_N \right],\;\Vec{d}_i \in \mathbb{R}^d,\; N>d$}
being a dictionary of size $N$. The function $\rm{Sp}( \Vec{y} )$ penalizes the solution if it is not sparse.
The most common form of $l_E(\Vec{x},\mathbb{D})$ in the literature is obtained via $\ell_1$-norm regularization:
\noindent
\begin{equation}
	l_E(\Vec{x},\mathbb{D}) \triangleq 
    \underset{\Vec{y}}{\min} \:
    \Bigl\| \Vec{x}- \sum\nolimits_{j=1}^{N} y_j \Vec{d}_j
    \Bigr\|_2^2
    +\lambda \|\Vec{y}\|_1. 
    \label{eqn:euc_sparse_coding2}
\end{equation}

As elaborated in~\cite{Harandi_ICCV_2013}, directly translating the sparse coding problem to a non-flat Riemannian manifold $\mathcal{M}$ 
with a metric {$\left\| \cdot \right\|_\mathcal{M}$} (such as geodesic distance)  leads to re-writing Eqn.~\eqref{eqn:euc_sparse_coding2} as:
\begin{equation}
	l_\mathcal{M}(\Mat{X},\mathbb{D}) \triangleq  
	\underset{\Vec{y}}{\min} \:  
    \Bigl\| \Mat{X} \ominus
    \biguplus\nolimits_{j=1}^{N} y_{j} \odot \Mat{D}_j \Bigr\|_\mathcal{M}^2
    +\lambda \|\Vec{y}\|_1,    
    \label{eqn:Riemannian_sc}
\end{equation}
where $\mathbb{D} = \big\{ \Mat{D}_i \big\}_{i=1}^{N},\; \Mat{D}_i \in \mathcal{M}$ is a Riemannian dictionary and 
$\Mat{X} \in \mathcal{M}$ is a query point. The operators $\ominus$, $\biguplus$ and $\odot$ are Riemannian replacements
for subtraction, summation and scalar multiplication, respectively. 
We note that the operators $\ominus$ and $\biguplus$ should be commutative and associative.

There are several difficulties in solving Eqn.~\eqref{eqn:Riemannian_sc}.
For example, metrics on Riemannian manifolds do not generally result in Eqn.~\eqref{eqn:Riemannian_sc} being convex~\cite{Harandi_ICCV_2013}.
As such, instead of solving Eqn.~\eqref{eqn:Riemannian_sc}, here we propose to side-step the difficulties by
embedding the manifold $\mathcal{M}$ into a Hilbert space $\mathcal{H}$
and replacing the idea of ``combination'' on manifolds with the general concept of linear combination in Hilbert spaces.

For the SPD manifold $\SPD{n}$, our idea is implemented as follows.
Let $\mathbb{D}=\{\boldsymbol{D}_1,\boldsymbol{D}_2,\cdots,\boldsymbol{D}_N\};{ }\boldsymbol{D}_i \in \SPD{n}$ 
and $\phi:\SPD{n} \rightarrow \mathcal{H}$ be 
a Riemannian dictionary and an embedding function on $\SPD{n}$, respectively.
Given a Riemannian point~{$\Mat{X}$}, we seek a sparse vector {$\Vec{y} \in \mathbb{R}^N$}
such that $\phi(\Mat{X})$ admits the sparse representation $\Vec{y}$
over $\{\phi(\Mat{D}_1),\phi(\Mat{D}_2),\cdots,\phi(\Mat{D}_N)\}$.
In other words, we are interested in solving the following problem:

\noindent
\begin{equation}
	l_\phi(\Mat{X},\mathbb{D}) \triangleq 
    \underset{\Vec{y}}{\min} \:
    \Big\| \phi\big(\Mat{X}\big)- \sum\limits_{j=1}^{N} y_j \phi\big(\Mat{D}_j\big)
    \Big\|_2^2
    +\lambda \|\Vec{y}\|_1. 
    \label{eqn:kernel_sparse_coding}
\end{equation}

For both $J$ and $S$ divergences,
an embedding $\phi$ with a reproducing kernel property~\cite{Shawe-Taylor2004book} exists as explained in~\textsection\ref{sec:preliminaries}.
This enables us to use the kernel property
$k(\Mat{X},\Mat{Y}) = \phi\big(\Mat{X}\big)^T\phi\big(\Mat{Y}\big)$
to expand the $\ell_2$ term in Eqn.~\eqref{eqn:kernel_sparse_coding} as:

\noindent
\begin{align}
    &~~~~\Big\| \phi\big(\Mat{X}\big)-\sum\limits_{j=1}^{N}y_j \phi\big(\Mat{D}_j\big) \Big\|_2^2 
    = \phi\big(\Mat{X}\big)^T\hspace{-1ex}\phi\big(\Mat{X}\big) \nonumber \\
    %
    &-2\hspace{-0.5ex}\sum\limits_{j=1}^{N}y_j \phi\big(\Mat{D}_j\big)^T \phi\big(\Mat{X}\big)
    +\sum\limits_{i,j=1}^{N} y_i y_j\phi\big(\Mat{D}_i\big)^T \phi\big(\Mat{D}_j\big)\nonumber \\  
    & = k(\Mat{X},\Mat{X})-2\Mat{y}^T\mathcal{K}(\Mat{X},\mathbb{D})+
    \Vec{y}^T \boldsymbol{\mathbb{K}(\mathbb{D},\mathbb{D})} \Vec{y},
    \label{eqn:KSR_Opt}
\end{align}%

\noindent
where
 $\mathcal{K}(\Mat{X},\mathbb{D})=[a_{i}]_{N \times 1}; ~ a_{i}=k(\boldsymbol{X},\boldsymbol{D}_i)$
and
$\mathbb{K}(\mathbb{D},\mathbb{D})=[a_{ij}]_{N \times N}$, with $ a_{ij}=k(\boldsymbol{D}_i,\boldsymbol{D}_j)$.
Since $k(\cdot,\cdot)$ is a reproducing kernel, $\mathbb{K}$ is positive definite. 
This reveals that the optimization problem in Eqn.~\eqref{eqn:KSR_Opt}
is convex and similar to its counterpart in Euclidean space,
except for the definition of $\mathcal{K}$ and $\mathbb{K}$.
Consequently, greedy or relaxation solutions can be adapted to obtain the sparse codes~\cite{ELAD_SR_BOOK_2010}.
To solve Eqn.~\eqref{eqn:KSR_Opt} efficiently, we have extended the Feature-Sign Search Algorithm (FSSA)~\cite{NIPS2006_NG} 
to its kernel version (kFSSA) in Appendix~\ref{app:kernel_feature_sign_alg}.

We note that kernel sparse coding and dictionary learning in traditional Euclidean spaces are studied recently 
in~\cite{Gao_ECCV_2010,Nguyen_TIP_2013}.
In contrast, our aim is to obtain sparse coding of points on SPD manifolds, using SPD matrices as dictionary atoms.
In our proposed solution this requires dedicated SPD kernels. Moreover, as will be discussed in~\textsection~\ref{sec:dic_learning} dedicated algorithms for dictionary learning should be devised.

~

\subsection{Classification Based on Sparse Representation}
\label{sec:sparse_classification}

If the atoms in the sparse dictionary are not labeled
(for example if $\mathbb{D}$  is a generic dictionary not tied to any particular class),
the generated sparse codes (vectors) for both training and query data can be fed to Euclidean-based classifiers
like support vector machines~\cite{Bishop_2006} for classification.
In a supervised classification scenario,
\ie,~if the atoms in sparse dictionary $\mathbb{D}$ are labeled,
the generated sparse codes of the query sample can be directly used for classification. Let
\mbox{$\Vec{y}_i = [ y_0\delta( l(0)-i ), ~y_1\delta( l(1)-i ), ~\cdots, ~y_N\delta( l(N)-i ) ]^T$}
be the class-specific sparse codes,
where {$l(j)$} is the class label of atom~$\Mat{D}_j$
and $\delta(x)$ is the discrete Dirac function~\cite{Bishop_2006}.
An efficient way of utilizing class-specific dictionary is through computing residual errors~\cite{Wright_2009_PAMI}.
In this case, the residual error of query sample $\Mat{X}$ for class $i$ is defined as:

\begin{equation}
    \varepsilon_i(\Mat{X})=  \Big\| \phi(\Mat{X}) - \sum\limits_{j=1}^{N} y_j \phi(\Mat{D}_j) \delta( l(j)-i ) \Big\|^2 .
    \label{eqn:sparse_classification1}
\end{equation}%

Expanding Eqn.~\eqref{eqn:sparse_classification1}
and noting that $k(\Mat{X},\Mat{X})$ is not class-dependent,
the following expression can be obtained:

\begin{equation}
    \varepsilon_i(\Mat{X})=
     -2\Vec{y}_i^T\Vec{\mathcal{K}(X,\mathbb{D})}+\Vec{y}_i^T \Mat{\mathbb{K}(\mathbb{D},\mathbb{D})} \Vec{y}_i .
     \label{eqn:sparse_classification2}
\end{equation}%

Alternatively, the similarity between query sample $\Mat{X}$ to class $i$
can be defined as \mbox{$S_i(\Mat{X})= h(\Vec{y}_i)$}.
The function {$h(\cdot)$} could be a linear function like \mbox{$h(\Vec{y}_i) = \Vec{y}_i^T\Vec{1}_{N\times1}$}
or even a non-linear one like {$h(\Vec{y}_i) = \max\left(\Vec{y}_i\right)$}.
Preliminary experiments suggest that Eqn.~\eqref{eqn:sparse_classification2} leads to higher classification accuracies
when compared to the aforementioned alternatives.

\subsection{Computational Complexity}

In terms of computational complexity, we note that the complexity of computing the determinant of an $n \times n$ matrix through 
Cholesky decomposition is $O(\frac{1}{3}n^3)$. Therefore, computing $S(\Mat{X},\Mat{D}_i)$ by storing the determinant of dictionary atoms 
during learning costs $O(\frac{2}{3}n^3)$.

For the $J$ divergence, we note that the inverse of an $n \times n$ SPD matrix can be computed through Cholesky decomposition
with $\frac{1}{2}n^3$ flops. Therefore, $J(\Mat{X},\Mat{D}_i)$ can be computed in $2n^{2.3} + \frac{1}{2}n^3$ flops if matrix multiplication is done efficiently.
As a result, computing the $J$ divergence is cheaper than computing $S$ divergence for SPD matrices of size less than 35.

The complexity of sparse coding is dictated by $\Vec{\mathcal{K}(X,\mathbb{D})}$ in Eqn.~\eqref{eqn:sparse_classification2}. Neglecting the 
complexity of the exponential in kernel functions, the complexity of generating Eqn.~\eqref{eqn:sparse_classification2} is  
$O\big(N(2n^{2.3} + \frac{1}{2}n^3)\big)$ for $J$ divergence and $O\big(\frac{2N}{3}n^3\big)$ for $S$ divergence.

Note that while the computational complexity is cubic in $n$, it is linear in $N$, \ie, number of dictionary atoms.
To give the reader an idea on the speed of the proposed methods,
it is worth mentioning that performing sparse coding on $93 \times 93$ covariance descriptors used in~\textsection~\ref{sec:exp_action_rec}
took less than 10 and 7 seconds with Jeffrey and Stein divergences, respectively (on an Intel i7 machine using Matlab).
Performing a simple nearest neighbor search using AIRM required more than 75 seconds on the same dataset.

\section{Dictionary Learning}
\label{sec:dic_learning}

Given a finite set of observations 
$\mathbb{X} = \{ \Mat{X}_i\}_{i=1}^m,\; \Mat{X}_i \in \SPD{n}$,
learning a dictionary	$\mathbb{D} = \{ \Mat{D}_i\}_{i=1}^N,\; \Mat{D}_i \in \SPD{n}$
by embedding SPD manifolds into Hilbert space can be formulated as minimizing the following energy function
with respect to $\mathbb{D}$:

\noindent
\begin{equation}
	f(\mathbb{X},\mathbb{D}) \triangleq  \sum\nolimits_{i=1}^{m} l_\phi(\Mat{X}_i,\mathbb{D}).
    \label{eqn:SPD_dic_learning}
\end{equation}

Here, $l_\phi(\Mat{X},\mathbb{D})$ is the loss function defined in Eqn.~\eqref{eqn:kernel_sparse_coding}.
$f(\mathbb{X},\mathbb{D})$ should be small if $\mathbb{D}$ is ``good''
at representing the signals $\Mat{X}_i$. 
Among the various solutions to the problem of dictionary learning in Euclidean spaces, iterative methods
like K-SVD have received much attention~\cite{ELAD_SR_BOOK_2010}.
Borrowing the idea from Euclidean spaces,
we propose to minimize the energy in Eqn.~\eqref{eqn:SPD_dic_learning} iteratively.

To this end, we first initialize the dictionary $\mathbb{D}$ randomly.
It is also possible to use intrinsic {\it k}-means clustering using the Karcher mean~\cite{Pennec_IJCV_2006} to initialize the dictionary.
Each iteration of dictionary learning then constitutes of two parts,
namely a sparse coding step and a dictionary update step.
In the sparse coding step,	the dictionary $\mathbb{D}$ is fixed and sparse codes, $\{\Vec{y}_i\}_{i=1}^m$ are computed as 
discussed in~\textsection~\ref{sec:sparse_coding}.
In the dictionary update step,	$\{\Vec{y}_i\}_{i=1}^m$ are held fixed while $\mathbb{D}$ is updated,
with each dictionary atom updated independently.
This resembles the Expectation Maximization (EM) algorithm~\cite{Dempster_1977} in nature. 
In the following subsections, we discuss how dictionary atoms can be updated for both $J$ and $S$ divergences.

\subsection{Dictionary Updates for $J$~Divergence}
\label{sec:sub_dic_learning_j_div}

As mentioned above, to update $\Mat{D}_r$,
we keep $\Mat{D}_j, \;j \neq r$
and the sparse codes $\{\Vec{y}_i\}_{i=1}^m$  in Eqn.~\eqref{eqn:SPD_dic_learning} fixed.
Generally speaking, one can update $\Mat{D}_r$ using gradient descend algorithms on SPD manifolds.
This can be done at iteration~$t$ by exploiting the tangent space at $\Mat{D}_r^{(t)}$ 
and moving along the direction of steepest descent and utilizing the exponential map to obtain $\Mat{D}_r^{(t+1)}$
as a point on $\SPD{n}$. 

In this paper, we propose to learn the dictionary in an online manner.
Our proposal results in an analytical and closed-form solution for updating dictionary atoms one by one. 
In contrast to \cite{Vemuri_ICML_2013}, our formulation does not exploit the tangent bundle and exponential maps,
and is hence faster and more scalable.
By fixing $\Mat{D}_j, \;j \neq r$ and $\{\Vec{y}_i\}_{i=1}^m$,
the derivative of Eqn.~\eqref{eqn:SPD_dic_learning} with respect to  $\Mat{D}_r$  can be computed as

\noindent
\begin{align}
	\frac{\partial f(\mathbb{X},\mathbb{D})}{\partial \Mat{D}_r} &= \sum\limits_{i=1}^{m} 
	\frac{\partial l_\phi(\Mat{X}_i,\mathbb{D})}{\partial \Mat{D}_r} \label{eqn:deriv_dic_eqn} \\
	&= \sum\limits_{i=1}^{m} \Vec{y}_{i,r} \Bigg(\sum\limits_{j=1}^{N} \Vec{y}_{i,j}\frac{\partial k(\Mat{D}_j,\Mat{D}_r)}{\partial \Mat{D}_r}
	-2\frac{\partial k(\Mat{X}_i,\Mat{D}_r)}{\partial \Mat{D}_r}\Bigg)\notag .
\end{align}%

\noindent
For the $J$~divergence, we note that

\noindent
\begin{equation}
	\nabla_{\Mat{X}} J(\Mat{X},\Mat{Y}) = \frac{1}{2}(\Mat{Y}^{-1} - \Mat{X}^{-1}\Mat{Y}\Mat{X}^{-1}).
	\label{eqn:gradient_J}
\end{equation}

\noindent
Therefore, 
\begin{equation}
	\frac{\partial k_J(\Mat{X},\Mat{Y})}{\partial \Mat{X}}  = -\frac{1}{2}\beta k_J(\Mat{X},\Mat{Y}) 
	(\Mat{Y}^{-1} - \Mat{X}^{-1}\Mat{Y}\Mat{X}^{-1}).
	\label{eqn:gradient_J_2}
\end{equation}

\noindent
Plugging Eqn.~\eqref{eqn:gradient_J_2} into Eqn.~\eqref{eqn:deriv_dic_eqn} and defining

\noindent
\begin{align}
	\Mat{P} \hspace{-0.5ex}&= \hspace{-1ex}\sum\limits_{i=1}^{m} \Vec{y}_{i,r} \bigg(
	\sum\limits_{j=1}^{N} \Vec{y}_{i,j}k_J(\Mat{D}_j,\Mat{D}_r)\Mat{D}_j^{-1}
	-2k_J(\Mat{X}_i,\Mat{D}_r)\Mat{X}_i^{-1} 
	\hspace{-0.5ex}  \bigg), \notag\\
	\Mat{Q} \hspace{-0.5ex}&= \hspace{-0.8ex}\sum\limits_{i=1}^{m}\hspace{-0.3ex} \Vec{y}_{i,r} \hspace{-0.5ex}
	\bigg(\hspace{-0.3ex}\sum\limits_{j=1}^{N} \hspace{-0.5ex}	\Vec{y}_{i,j}k_J(\Mat{D}_j,\Mat{D}_r\hspace{-0.4ex})\Mat{D}_j
	\hspace{-0.5ex}- \hspace{-0.5ex}2k_J(\Mat{X}_i,\Mat{D}_r\hspace{-0.4ex})\Mat{X}_i
	\hspace{-0.5ex}\bigg)\hspace{-0.5ex},
	\label{eqn:dic_learning_j_comp}
\end{align}%

\noindent
then the root of Eqn.~\eqref{eqn:deriv_dic_eqn}, \ie, $\partial f(\mathbb{X},\mathbb{D})/\partial \Mat{D}_r = 0$ can be written as: 
\begin{equation}
	\Mat{D}_r^{-1}\Mat{Q}\Mat{D}_r^{-1} = \Mat{P}.
	\label{eqn:sol_dic_learning_j_eq}
\end{equation}
This equation is identified as a \emph{Riccati} equation~\cite{BHATIA_2007}.
Its solution is positive definite and given as

\noindent
\begin{equation}
	\Mat{D}_r = \Mat{Q}^{1/2}\big(\Mat{Q}^{-1/2} \Mat{P}^{-1} \Mat{Q}^{-1/2}\big)^{1/2}\Mat{Q}^{1/2},
	\label{eqn:sol_dic_learning_j_eq2}
\end{equation}

\noindent
provided that both $\Mat{P}$ and $\Mat{Q}$ are positive definite. 
We note that in deriving the solution, we have assumed that $k_J(\Mat{D}_r,\cdot)$ at iteration $t$ 
can be replaced by $k_J(\Mat{D}_r^{t-1},\cdot)$ and hence $k_J(\Mat{D}_r,\cdot)$ are treated as scalars.

\subsection{Dictionary Updates for $S$~Divergence}
\label{sec:sub_dic_learning_s_div}
Similar to~\textsection~\ref{sec:sub_dic_learning_j_div}, we need to compute the gradient of Eqn.~\eqref{eqn:SPD_dic_learning}
with respect to $\Mat{D}_r$, while $\{\Vec{y}_i\}_{i=1}^m$ and other atoms are fixed. Noting that
\begin{equation}
	\nabla_{\Mat{X}} S(\Mat{X},\Mat{Y}) = (\Mat{X}+\Mat{Y})^{-1} - \frac{1}{2} \Mat{X}^{-1},
	\label{eqn:gradient_Stein}
\end{equation}
\noindent
the solution of $\partial f(\mathbb{X},\mathbb{D})/\partial \Mat{D}_r = 0$ with $k_S(\cdot,\cdot)$ can be written as:

\begin{align}
       	&\sum \limits_{i=1}^{m}\Vec{y}_{i,r}\Bigg(
       	2k_S(\Mat{X}_i,\Mat{D}_r)\Big( (\Mat{X}_i+\Mat{D}_r)^{-1} -\frac{1}{2} \Mat{D}_r^{-1} \Big)\Bigg) =\notag \\
    	&\sum \limits_{i=1}^{m}\Vec{y}_{i,r}\Bigg(\sum \limits_{j=1}^{N} \Vec{y}_{i,j} k_S(\Mat{D}_j,\Mat{D}_r ) 
    	\Big( (\Mat{D}_j+\Mat{D}_r)^{-1} - \frac{1}{2} \Mat{D}_r^{-1}\Big)
    	\Bigg). 
    \label{eqn:dic_stein_lrn3}
\end{align}%

\noindent
Since Eqn.~\eqref{eqn:dic_stein_lrn3} contains inverses and kernel values, a~closed-form solution for computing $\Mat{D}_r$ cannot be sought.
As such, we propose an alternative solution	by exploiting previous values of $\left ( \Mat{D}_i + \Mat{D}_r \right )^{-1}$	
in the update step.
More specifically, rearranging Eqn.~\eqref{eqn:dic_stein_lrn3} and replacing	$k(\cdot,\Mat{D}_r)$
as well as	$ \left ( \Mat{D}_i + \Mat{D}_r \right )^{-1}$	by their previous values,
atom  $\Mat{D}_r$ at iteration $t+1$ is updated according to:

\begin{equation}
   	\Mat{D}_r^{(t+1)} =
   	\frac{2\Mat{P}^{-1}}{\sum\limits_{i=1}^{m}\Vec{y}_{i,r}\Big(
   	2k_S(\Mat{X}_i,\Mat{D}_r)-\sum\limits_{j=1}^{N}\Vec{y}_{i,j} k_S(\Mat{D}_j,\Mat{D}_r)\Big)}
   	,    	
   	\label{eqn:dic_lrn4}
\end{equation}%

\noindent
where,
\begin{align}
    \Mat{P} = \sum \limits_{i=1}^{m}\Vec{y}_{i,r} \Bigg(
    2k_S(\Mat{X}_i,\Mat{D}_r)\Big( \Mat{X}_i + \Mat{D}_r^{(t)} \Big)^{-1} \notag\\
    - \sum\limits_{j=1}^{N}\Vec{y}_{i,j}k_S(\Mat{D}_j,\Mat{D}_r)\Big( \Mat{D}_j + \Mat{D}_r^{(t)} \Big)^{-1}\Bigg).
    \label{eqn:dic_lrn5}
\end{align}

\subsection{Practical Considerations}
\label{sec:sub_prac_cons}

The dictionary update in Eqn.~\eqref{eqn:sol_dic_learning_j_eq2} results in an SPD matrix provided that matrices $\Mat{P}$ and $\Mat{Q}$
are SPD. In practice, this might not be the case and as such projection to the positive definite cone is required.
The same argument holds for Eqn.~\eqref{eqn:dic_lrn4}. 
Given an arbitrary square matrix $\Mat{A} \in \mathbb{R}^{n \times n}$, the problem of finding the closest SPD matrix 
to $\Mat{A}$ has received considerable attention in the literature (\cf,~\cite{Higham_1988}). 
While projecting onto positive definite cone can be achieved by thresholding (\ie, replacing negative eigenvalues by a small positive number),
a more principal approach can be used as follows.
If square matrix $\Mat{X}$ is positive definite then $\Mat{X} + \Mat{X}^T$ is also positive definite.
As such, the following convex problem can be solved to obtain the closest SPD matrix $\Mat{X}$ to the square matrix $\Mat{A}$ by using a solver like CVX~\cite{CVX}. 
\begin{align}
&\min \|\Mat{A} - \Mat{X}\|_F \notag\\
&\mathrm{s.t.}~~\Mat{X} + \Mat{X}^T \succ 0
\end{align}
We note that the formulation provided here works for non-symmetric matrix $\Mat{A}$ as well.
This is again useful in practice as numerical issues might create non-symmetric matrices (\eg, $\Mat{P}$ and $\Mat{Q}$ in Eqn.~\eqref{eqn:sol_dic_learning_j_eq2} might not become symmetric
due to the limited numerical accuracy in a computational implementation).

\section{Experiments}
\label{sec:experiments}

Two sets of experiments\footnote{The corresponding Matlab/Octave source code is available at \mbox{\url{http://nicta.com.au/people/mharandi}}}
are presented in this section.
In the first set, we evaluate the performance of the proposed sparse coding methods (as described in~\textsection~\ref{sec:sparse_coding}) without dictionary learning. 
This is to contrast sparse coding to previous state-of-the-art methods on several popular closed-set classification tasks.
To this end, each point in the training set is considered as an atom in the dictionary.
Since the atoms in the dictionary are labeled in this case, the residual error approach for classification 
(as described in~\textsection~\ref{sec:sparse_classification}) will be used to determine the label of a query point.
In the second set of experiments, the performance of the sparse coding methods is evaluated
in conjunction with the proposed dictionary learning algorithms described in~\textsection~\ref{sec:dic_learning}.
For brevity, we denote Riemannian sparse representation with $J$~divergence as \mbox{RSR-J},
and the $S$~divergence counterpart as \mbox{RSR-S}. 

The first priority of the experiments is to contrast the proposed methods against recent techniques designed to work on SPD manifolds.
That is, the tasks and consequently the datasets were chosen to enable fair comparisons against state-of-the-art SPD methods.
While exploring other visual tasks such as face verification~\cite{Chen_CVPR_2013_Blessing}
is beyond the scope of this paper, it is an interesting path to pursue in future work. 
\subsection{Sparse Coding}%
Below, we compare and contrast the performance of \mbox{RSR-J} and \mbox{RSR-S} methods against state-of-the-art techniques in five classification tasks,
namely action recognition from 3D skeleton data, face recognition,  material classification, person re-identification and texture categorization.

\subsubsection{Action Recognition from 3D Skeleton Sequences}
\label{sec:exp_action_rec}

We used the motion capture HDM05 dataset~\cite{HDM05_Doc} for the task of action recognition from skeleton data.
Each action is encoded by the locations of 31 joints over time, with the speed of 120 frames per second.
Given an action by $K$ joints over $m$ frames, we extracted the joint covariance descriptor~\cite{Husse_IJCAI_2013} which is an SPD matrix
of size $3K \times 3K$ as follows.
Let $x_i(t)$, $y_i(t)$ and $z_i(t)$ be the $x$, $y$, and $z$ coordinates of the $i$-{th} joint at frame
$t$. Let $\Vec{f}(t)$ be the vector of all joint locations at time $t$, \ie, 
{\small $\Vec{f}(t)~=~\left( x_1(t), \cdots, x_K(t),y_1(t),\cdots, y_K(t),z_1(t),  \cdots, z_K(t) \right)^T$}, which has $3K$ elements.
The action represented over $m$ frames is then described by the covariance of vectors $\Vec{f}(t)$.

We used 3 subjects (140 action instances) for training, 
and the remaining 2 subjects (109 action instances) for testing. 
The set of actions used in this experiment is: `clap above head', `deposit floor', `elbow to knee', `grab high', 
`hop both legs', `jog', `kick forward', `lie down floor', `rotate both arms backward', `sit down chair',
`sneak', `squat', `stand up lie' and `throw basketball'. 

In Table~\ref{tab:table_HDM05_performance} we compare the performance of \mbox{RSR-J} and \mbox{RSR-S} against logEuc-SC~\cite{Guo_TIP13} and Cov3DJ~\cite{Husse_IJCAI_2013}.
The TSC algorithm~\cite{TSC_PAMI_2014} does not scale well to large SPD matrices and thus is not considered here.
Cov3DJ encodes the relationship between joint movement and time by deploying multiple covariance matrices over sub-sequences in a hierarchical fashion.
The results show that in this case \mbox{RSR-J} is better than \mbox{RSR-S}.
Furthermore, both \mbox{RSR-J} and \mbox{RSR-S} outperform logEuc-SC and Cov3DJ.
 
\begin{table}[!tb]
  	\centering
 	\caption{\small Recognition accuracy (in \%) for the HDM05-MOCAP dataset~\cite{HDM05_Doc}}
    \begin{tabular}{lc}
    	\toprule
    	{\bf Method} &{\bf Recognition Accuracy }\\
    	\toprule  		
		{\bf logEuc-SC~\cite{Guo_TIP13}}    &$89.9\%$\\
		{\bf Cov3DJ~\cite{Husse_IJCAI_2013}}    &$95.4\%$\\
    	\midrule
    	{\bf \mbox{RSR-J}}           &$\bf 98.2\%$\\    			
    	{\bf \mbox{RSR-S}}           &$97.3\%$\\
		\bottomrule	
    \end{tabular}

    \label{tab:table_HDM05_performance}
\end{table}

\begin{figure}[!tb]
	\centering
	\includegraphics[scale = 0.85]{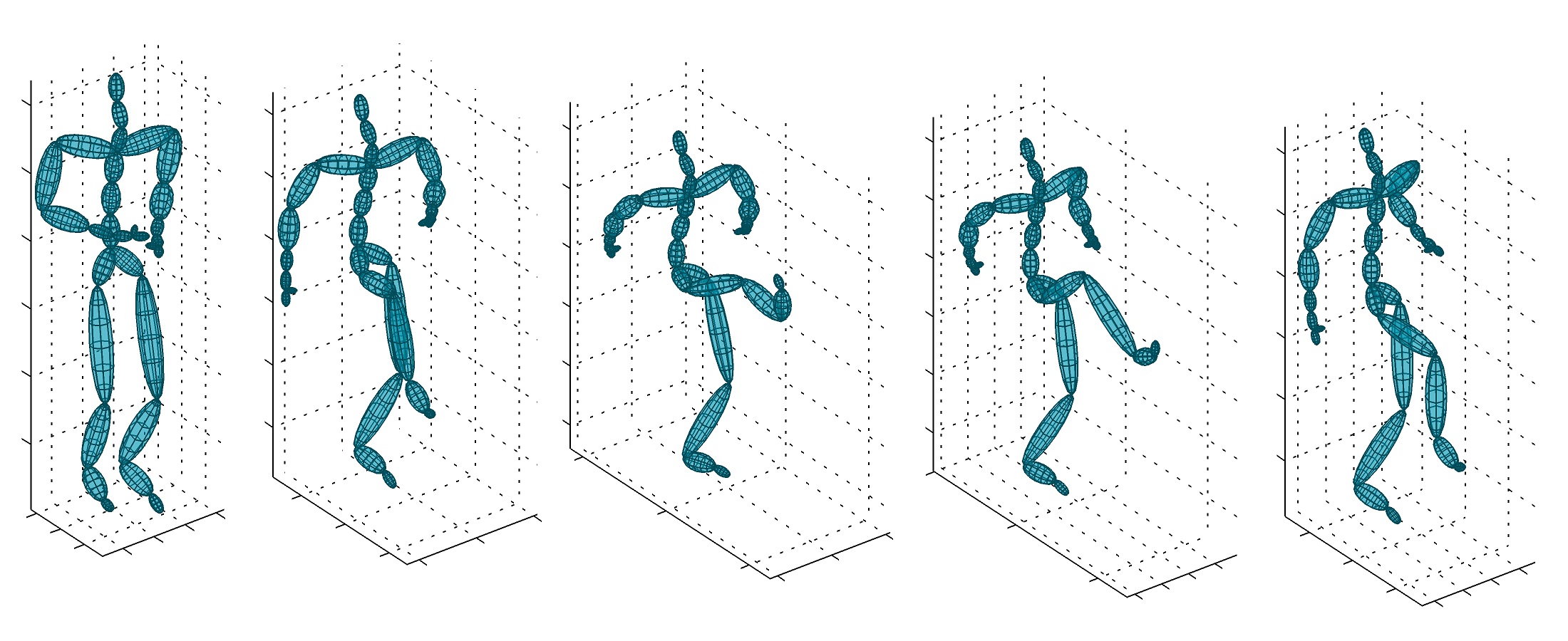}
	\caption{\small Example of a kicking action from the HDM05 action dataset~\cite{HDM05_Doc}.}
	\label{fig:MOCAP_Dataset}
\end{figure}		

\subsubsection{Face Recognition}
\label{sec:exp_face_rec}
We used the `b' subset of the FERET dataset~\cite{FERET_Dataset}, which includes 1800 images from 200 subjects.
The images were closely cropped around the face and downsampled to  $64 \times 64$.
Examples are shown in Figure~\ref{fig:FERET_Dataset}.

We performed four tests with various pose angles.
Training data was composed of images marked `ba', `bj' and `bk' (\ie, frontal faces with expression and illumination variations).
Images with `bd', `be', `bf', and `bg' labels (\ie, \mbox{non-frontal} faces) were used as test data.

\def \FERET_SCALE {0.125}
\begin{figure}[!tb]	
	\centering
	\begin{subfigure}{\FERET_SCALE \columnwidth}
		\includegraphics[width = \columnwidth]{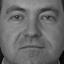}
		\caption{ba}
	\end{subfigure}
	\begin{subfigure}{\FERET_SCALE \columnwidth}
		\includegraphics[width = \columnwidth]{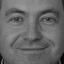}
		\caption{bj}
	\end{subfigure}
	\begin{subfigure}{\FERET_SCALE \columnwidth}
		\includegraphics[width = \columnwidth]{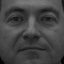}
		\caption{bk}
	\end{subfigure}
	\begin{subfigure}{\FERET_SCALE \columnwidth}
		\includegraphics[width = \columnwidth]{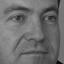}
		\caption{bd}
	\end{subfigure}
	\begin{subfigure}{\FERET_SCALE \columnwidth}
		\includegraphics[width = \columnwidth]{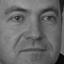}
		\caption{be}
	\end{subfigure}
	\begin{subfigure}{\FERET_SCALE \columnwidth}
		\includegraphics[width = \columnwidth]{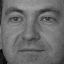}
		\caption{bf}
	\end{subfigure}	
	\begin{subfigure}{\FERET_SCALE \columnwidth}
		\includegraphics[width = \columnwidth]{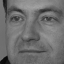}
		\caption{bg}
	\end{subfigure}		
	\caption{Examples from the FERET face dataset~\cite{FERET_Dataset}}
	\label{fig:FERET_Dataset}
\end{figure}

Each face image is described by a $43 \times 43$ SPD matrix
using the following features:
\noindent
\begin{equation*}
	\Vec{f}_{x,y} =	\left(~ I(x,y),~ x,~ y,~ |G_{0,0}(x,y)|,~\cdots,~|G_{4,7}(x,y)| ~\right)^T\;,
\end{equation*}%
\noindent
where {\small $I(x,y)$} is the intensity value at position $(x,y)$,
$|\cdot|$ denotes the magnitude of a  complex value
and $G_{u,v}{(x,y)}$ is the response of a 2D Gabor wavelet
centered at $(x,y)$ with orientation $u$ and scale $v$.
In this work, we followed~\cite{Pang_TCSVT_2008} and generated 40 Gabor filters in 8 orientations and 5 scales. 
		
The proposed methods are compared against TSC~\cite{TSC_PAMI_2014},
logEuc-SC~\cite{Guo_TIP13},
Sparse Representation-based Classification (SRC)~\cite{Wright_2009_PAMI}
and its Gabor-based extension (GSRC)~\cite{Yang_ECCV_2010_GSRC}.
For SRC, PCA was used to reduce the dimensionality of data.
We evaluated the performance of SRC for various dimensions of PCA space and the maximum performance is reported.
For the GSRC algorithm~\cite{Yang_ECCV_2010_GSRC}, we followed the recommendations of the authors for the downsampling factor in Gabor filtering. 
As for the logEuc-SC, we consider a kernel extension of the original algorithm.
In other words, instead of directly using $\log(\cdot)$ representations in a sparse coding framework as done in~\cite{Guo_TIP13},
we consider a kernel extension on $\log$ representations using an RBF kernel.
The kernel extension of sparse coding is discussed in depth in~\cite{Gao_ECCV_2010,Nguyen_TIP_2013}. 
This enhances the results in all cases and makes the logEuc-SC and RSR methods more comparable.

Table~\ref{tab:table_FERET_performance} shows the performance of all the studied methods for the task of face recognition. 
Both \mbox{RSR-J} and \mbox{RSR-S} outperform other methods,
with \mbox{RSR-S} being marginally better than \mbox{RSR-J}.

\begin{table}[!tb]
 	\caption    {    
		Recognition accuracy (in \%) for the FERET face dataset~\cite{FERET_Dataset}.
    }
  	\centering
    \begin{tabular}{lccccc}
    	\toprule
    	{\bf Method} 							&{\bf bd }	&{\bf be }	&{\bf bf }	&{\bf bg }	 &{\bf average}\\
    	\toprule  		
    	{\bf SRC~\cite{Wright_2009_PAMI}}      	&$27.5\%$	&$55.5\%$	&$61.0\%$	&$26.0\%$	 &$42.5\%$\\  
    	{\bf GSRC~\cite{Yang_ECCV_2010_GSRC}}   &$77.0\%$	&$93.5\%$	&$97.0\%$	&$79.0\%$	 &$86.6\%$\\
    	\midrule
    	{\bf logEuc-SC~\cite{Guo_TIP13}}   		&$74.0\%$	&$94.0\%$	&$97.5\%$	&$80.5\%$	 &$86.5\%$\\
    	{\bf TSC~\cite{TSC_PAMI_2014}}   			&$36.0\%$	&$73.0\%$	&$73.5\%$	&$44.5\%$	 &$56.8\%$\\
    	\midrule
		{\bf RSR-J}      					&$\bf 82.5\%$	&$94.5\%$	&$\bf 98.0\%$	&$83.5\%$	 &$89.6\%$\\    			
		{\bf RSR-S}      					&$79.5\%$	&$\bf96.5\%$	&$97.5\%$	&$\bf 86.0\%$	 &$\bf 89.9\%$\\ 
		\bottomrule	
	\end{tabular}
	\label{tab:table_FERET_performance}
\end{table}

\subsubsection{Material Categorization}
\label{sec:exp_material_cat}
We used the Flickr dataset~\cite{Flickr_Dataset} for the task of material categorization.
The dataset contains ten categories of materials: 
\textit{fabric}, \textit{foliage}, \textit{glass}, \textit{leather},
\textit{metal}, \textit{paper}, \textit{plastic}, \textit{stone}, \textit{water} and \textit{wood}. 
Each category has 100 images, 50 of which are close-up views and the remaining 50 are views at object-scale 
(see Figure~\ref{fig:FM_Dataset} for examples).
A binary, human-labeled mask is provided for each image in the dataset, 
describing the location of the object in the image. We only consider pixels inside this binary mask for material recognition
and disregard all background pixels. 
SIFT~\cite{SIFT_IJCV_2004} features have recently been shown to be robust and discriminative for material classification~\cite{UIUC_Dataset}.
We therefore constructed RCMs of size $155\times155$ using 128 dimensional SIFT features (only from gray-scaled images) and 27 dimensional color descriptors.
To this end, SIFT descriptors were computed at points on a regular grid with 5 pixel spacing.
The color descriptor was obtained by simply stacking colors from $3\times3$ patches centered at grid points. 
	
Table~\ref{tab:table_FMD_performance} compares the performance of the proposed methods against
the state-of-the-art non-parametric geometric detail extraction method (SD)~\cite{UIUC_Dataset},
augmented Latent Dirichlet Allocation (aLDA)~\cite{FMD_CVPR_2010}, 
and the texton-based representation introduced in~\cite{Varma_2009_PAMI}. 
The results indicate that RSR-S considerably outperforms previous state-of-the-art approaches.
We also note that RSR-J outperforms methods proposed in~\cite{Varma_2009_PAMI,UIUC_Dataset}
by a large margin and is only slightly worse than the aLDA algorithm~\cite{FMD_CVPR_2010}.

\def \FMD_SCALE {0.18}
\begin{figure}[!tb]
	\begin{minipage}{\columnwidth}\centering
		\includegraphics[width = \FMD_SCALE \columnwidth]{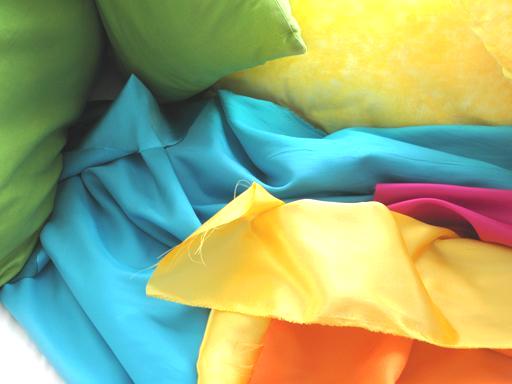}
		\includegraphics[width = \FMD_SCALE \columnwidth]{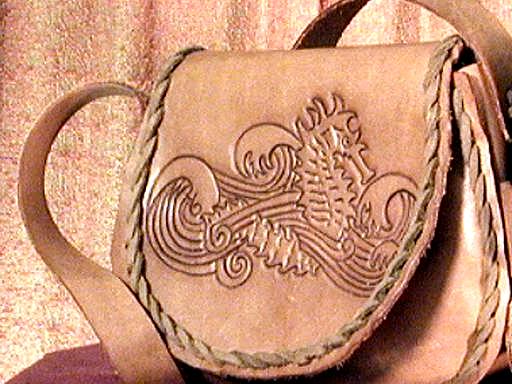}
		\includegraphics[width = \FMD_SCALE \columnwidth]{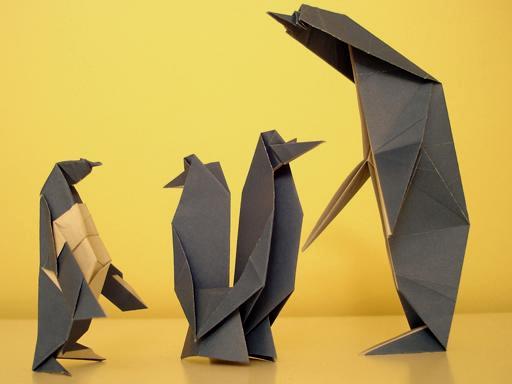}
		\includegraphics[width = \FMD_SCALE \columnwidth]{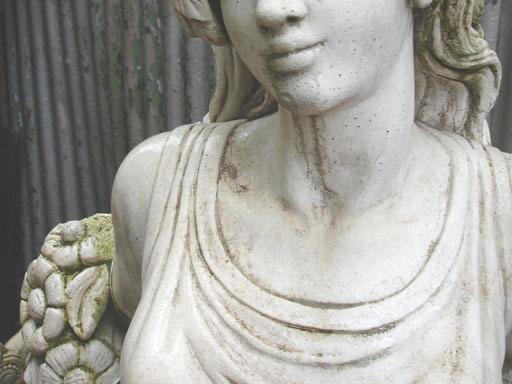}
		\includegraphics[width = \FMD_SCALE \columnwidth]{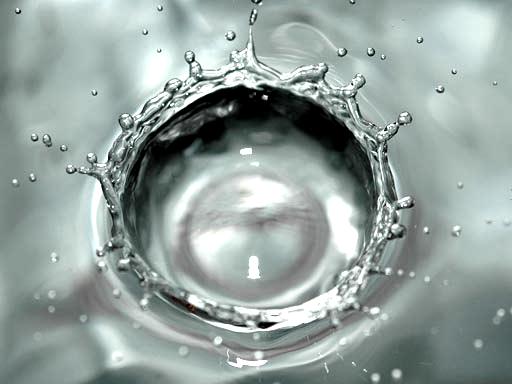}
	\end{minipage}
	\caption{Examples from the Flickr dataset~\cite{Flickr_Dataset}.}
	\label{fig:FM_Dataset}
\end{figure}

\begin{table}[!tb]
	\caption{    
		\small
	    Recognition accuracy (in \%) along its standard deviation for the Flickr dataset~\cite{Flickr_Dataset}.
    }
  	\centering
    \begin{tabular}{lc}
    	\toprule
    	{\bf Method} &{\bf Recognition Acc.}\\
    	\toprule    		
    	{\bf VZ~\cite{Varma_2009_PAMI}}                          &$23.8\% \pm N/A$\\
	   	{\bf VZ-augmented~\cite{FMD_CVPR_2010}}                  &$37.4\% \pm N/A$\\
	   	{\bf SD~\cite{UIUC_Dataset}}                          	 &$29.9\% \pm N/A$\\
    	{\bf aLDA~\cite{FMD_CVPR_2010}	}                  		 &$44.6\% \pm N/A$\\ 
   		{\bf RSR-J}          		 	 						&$44.0\% \pm 3.0$\\
   		{\bf RSR-S}          		 	 						&$\bf 51.4\% \pm 1.9$\\
	    \bottomrule	
   	\end{tabular}
   	\label{tab:table_FMD_performance}
\end{table}

\subsubsection{Person Re-identification}
\label{sec:exp_person_rec}
We used the modified ETHZ dataset~\cite{Schwartz_ETHZ}.
The original ETHZ dataset was captured using a moving camera~\cite{ETHZ_ICCV},
providing a range of variations in the appearance of people.
The dataset is structured into 3 sequences.
Sequence~1 contains 83 pedestrians (4,857 images),
Sequence~2 contains 35 pedestrians (1,936 images),
and
Sequence~3 contains 28 pedestrians (1,762 images).
See left panel of Fig.~\ref{fig:ETHZ_results} for examples.

We downsampled all images to \mbox{\small $64 \times 32$} pixels.
For each subject we randomly selected 10 images for training and used the rest for testing.
Random selection of training and testing data was repeated 20 times to obtain reliable statistics.
To describe each image, the covariance descriptor was computed using the following features:

\noindent
\begin{equation*}
\Vec{f}_{\Vec{u}}
\mbox{=}
\big (
  \Vec{u},~
  R_{\Vec{u}},~   G_{\Vec{u}},~   B_{\Vec{u}},~
  \dot R_{\Vec{u}},~  \dot G_{\Vec{u}},~  \dot B_{\Vec{u}},~
  \ddot R_{\Vec{u}},~ \ddot G_{\Vec{u}},~  \ddot B_{\Vec{u}}~
\big)^T
\end{equation*}%

\noindent
where {$\Vec{u} = (x,y)^T$} is the position of a pixel,
while
{ $R_{\Vec{u}}$}, {$G_{\Vec{u}}$} and {$B_{\Vec{u}}$}
represent the corresponding color information.
The gradient and Laplacian for color {\footnotesize $C$}
are represented by
$\dot C_{\Vec{u}} \mbox{=} \big( \left|{\partial C} \middle/ {\partial x}\right|, \left|{\partial C} \middle/ {\partial y}\right| \big)$
and
$\ddot C_{\Vec{u}} \mbox{=} \big( \left|{\partial^2 C} \middle/ {\partial x^2}\right|, \left|{\partial^2 C} \middle/ {\partial y^2}\right| \big)$,
respectively.

We compared the proposed RSR methods with
several techniques previously used for pedestrian detection:
Symmetry-Driven Accumulation of Local Features (SDALF)~\cite{Bazzani_CVIU13},
Riemannian Locality Preserving Projection (RLPP)~\cite{Harandi_WACV_2012},
and log-Euclidean sparse coding~\cite{Guo_TIP13}.
The results for TSC~\cite{TSC_PAMI_2014} could not be generated in a timely manner
due to the heavy computational load of the algorithm.

Results for the first two sequences are shown in Fig.~\ref{fig:ETHZ_results},
in terms of cumulative matching characteristic (CMC) curves.
The CMC curve represents the expectation of finding the correct match in the top {\small $n$} matches.
The proposed \mbox{RSR-S} method obtains the highest accuracy on both sequences. 
\mbox{RSR-J} outperforms SDALF, RLPP and log-Euclidean sparse coding on sequence one.
For the second sequence, RLPP and SDALF perform better than \mbox{RSR-J} for low ranks while
\mbox{RSR-J} outperforms them for rank higher than two.

For Sequence 3 (not shown), very similar performances are obtained by SDALF, RLPP and the proposed methods.
For this sequence, \mbox{RSR-J} and \mbox{RSR-S} achieve rank 1 accuracy of $98.3\%$ and $98.7\%$, respectively.
The CMC curves are almost saturated at perfect recognition at rank 3 for both \mbox{RSR-J} and \mbox{RSR-S} methods.

\def \ETHZ_SCALE {0.375}
\begin{figure*}[!tb]\centering
	\begin{subfigure}{0.20 \textwidth}
		\includegraphics[width= 1.0 \columnwidth,keepaspectratio]{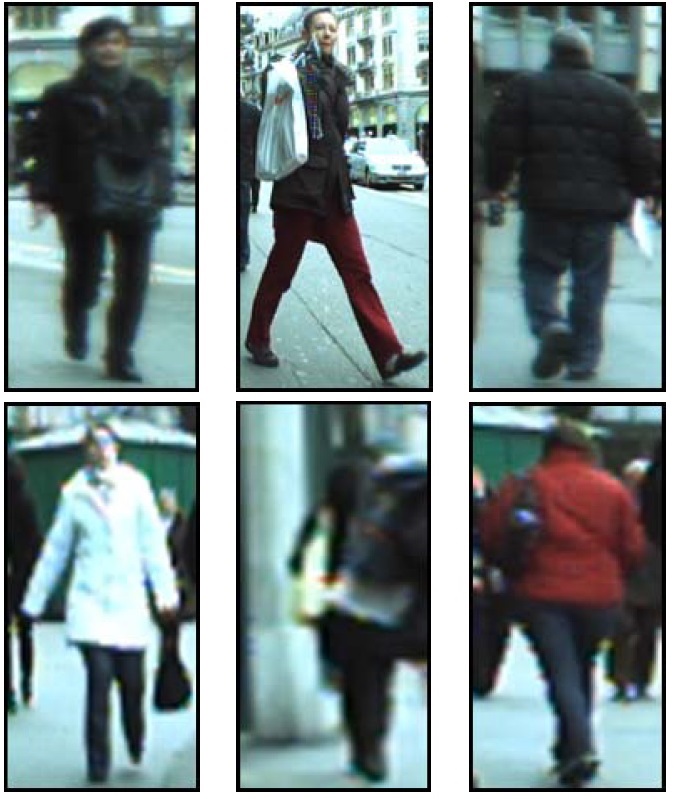}
		\caption{Sample images}
	\end{subfigure}	
	~
	\begin{subfigure}{\ETHZ_SCALE \textwidth}
		\includegraphics[width = \columnwidth,keepaspectratio]{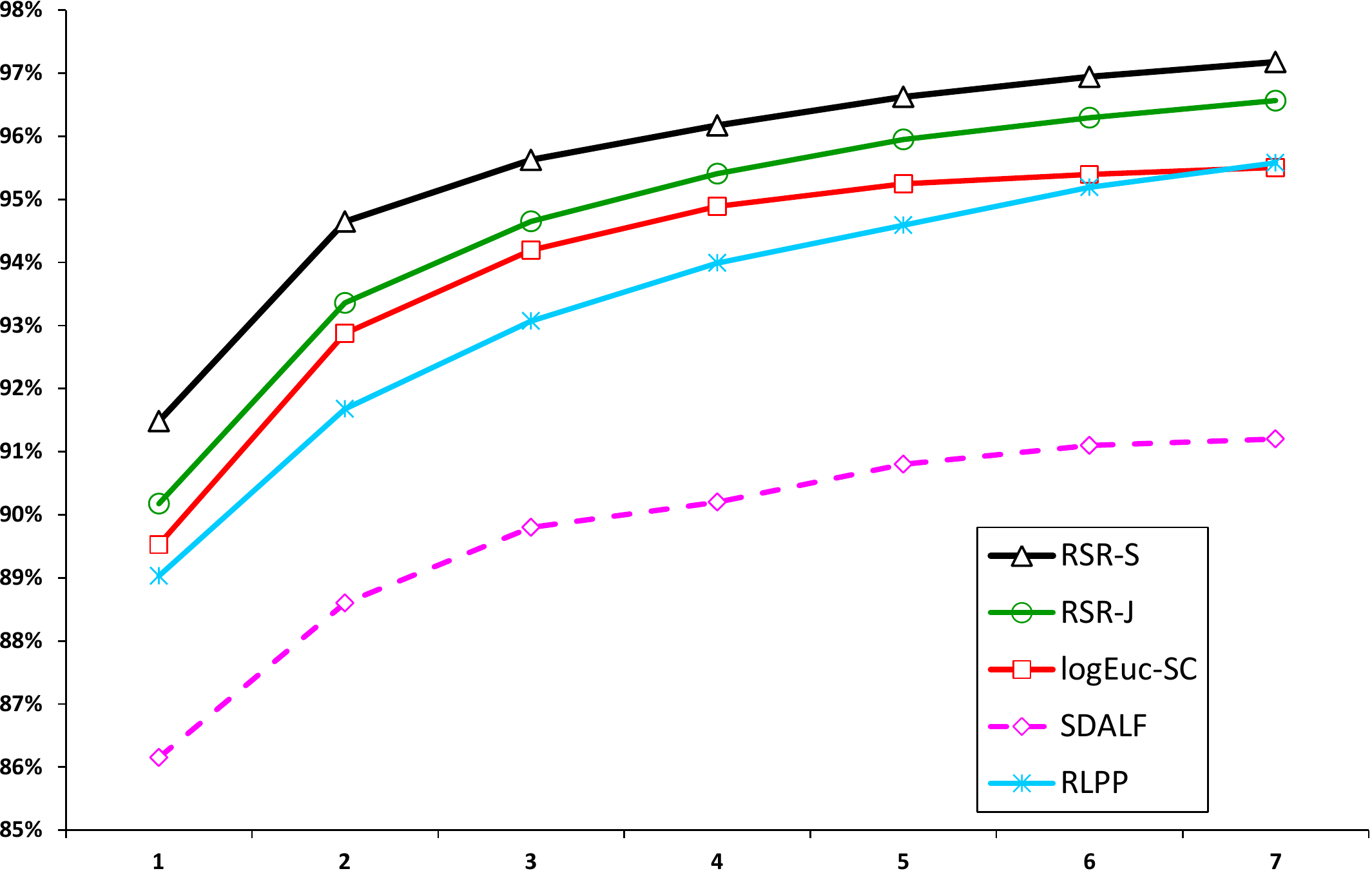}
		\caption{Seq.~\#1}
	\end{subfigure}
	~
	\begin{subfigure}{\ETHZ_SCALE \textwidth}
		\includegraphics[width = \columnwidth,keepaspectratio]{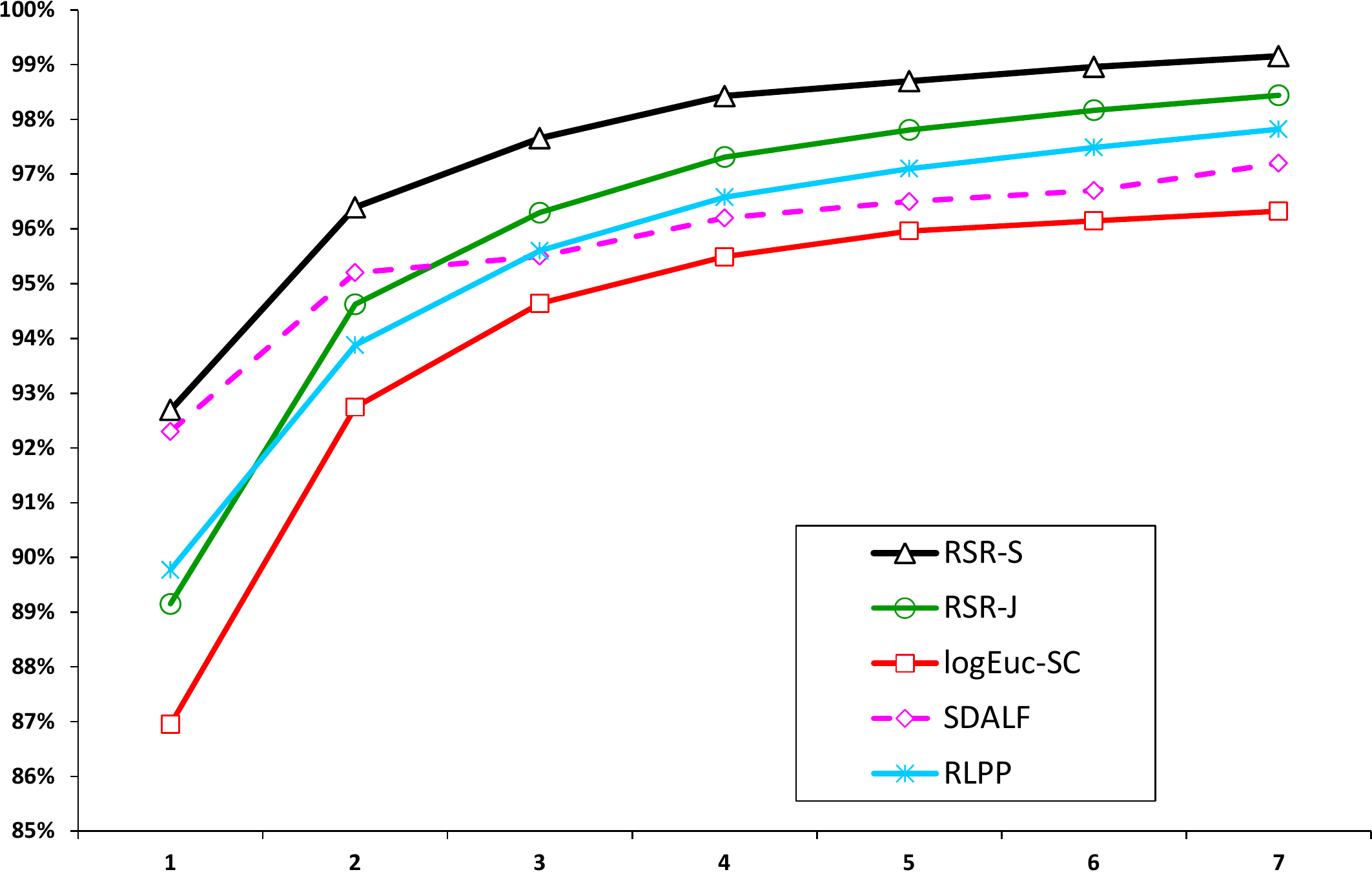}
		\caption{Seq.~\#2}
	\end{subfigure}
	\caption
        {\small
        Person Re-identification using the ETHZ dataset~\cite{ETHZ_ICCV}. Left column, examples of pedestrians in the ETHZ dataset.
        Middle column, results on Seq.~\#1; right column, on Seq.~\#2. 
        The proposed \mbox{RSR-J} and \mbox{RSR-S} methods are compared with SDALF~\cite{Bazzani_CVIU13}, RLPP~\cite{Harandi_WACV_2012} and log-Euclidean sparse coding (logEuc-SC)~\cite{Guo_TIP13}.
        }
    \label{fig:ETHZ_results}
\end{figure*}

\subsubsection{Texture Classification}
\label{sec:exp_texture_rec}
We performed a classification task using the Brodatz texture dataset~\cite{Brodatz_Dataset}.
Examples are shown in Fig.~\ref{fig:Texture_example}.
We followed the test protocol devised in~\cite{TSC_PAMI_2014}
and generated nine test scenarios with various number of classes.
This includes 5-texture (`5c', `5m', `5v', `5v2', `5v3'),
10-texture (`10', `10v') and 16-texture (`16c', `16v') mosaics.
To create a Riemannian manifold,
each image was first downsampled to {\small $256 \times 256$}
and then split into 64 regions of size {\small $32 \times 32$}.
The feature vector for any pixel {\small $I\left(x,y\right)$}
is $\Vec{f}(x, y)  =
  \Big(
    I\left(x,y\right),
    \left| \frac{\partial I}  {\partial x}  \right|,  \left|\frac{\partial I}  {\partial y}  \right|,
    \left| \frac{\partial^2 I}{\partial x^2}\right|,  \left|\frac{\partial^2 I}{\partial y^2}\right|
  \Big)^T$.
Each region is described by a {\small $5 \times 5$} covariance descriptor of these features.
For each test scenario, five covariance matrices per class were randomly selected as training data and the rest was used for testing.
The random selection of training/testing data was repeated 20 times.

Fig.~\ref{fig:Texture_results} compares the proposed RSR methods against
logEuc-SC~\cite{Guo_TIP13}
and TSC~\cite{TSC_PAMI_2014}.
In general, the proposed methods obtain the highest recognition accuracy on all test scenarios
except for the `5c' test, where both methods have slightly worse performance than TSC.
We note that in some cases such as  `5m' and `5v2', \mbox{RSR-J} performs better than \mbox{RSR-S}.
However, \mbox{RSR-S} is overall a slightly superior method for this task.

\begin{figure}[!tb]
       \centering
       \includegraphics[width=0.2 \columnwidth,keepaspectratio]{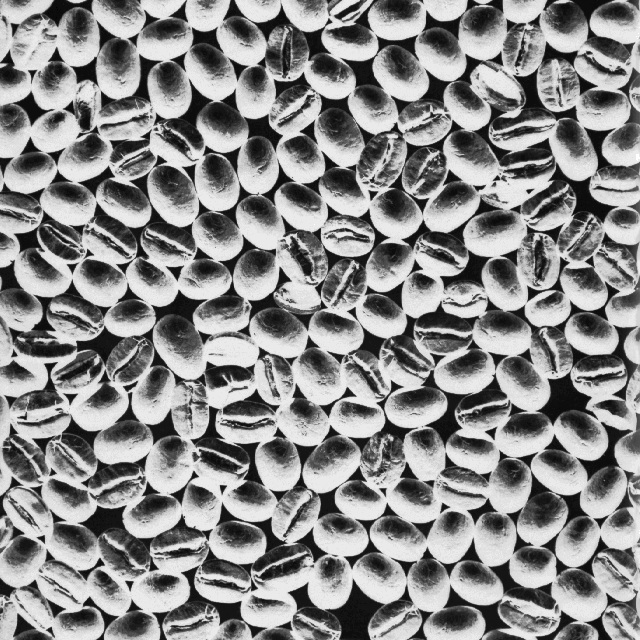}
       \includegraphics[width=0.2 \columnwidth,keepaspectratio]{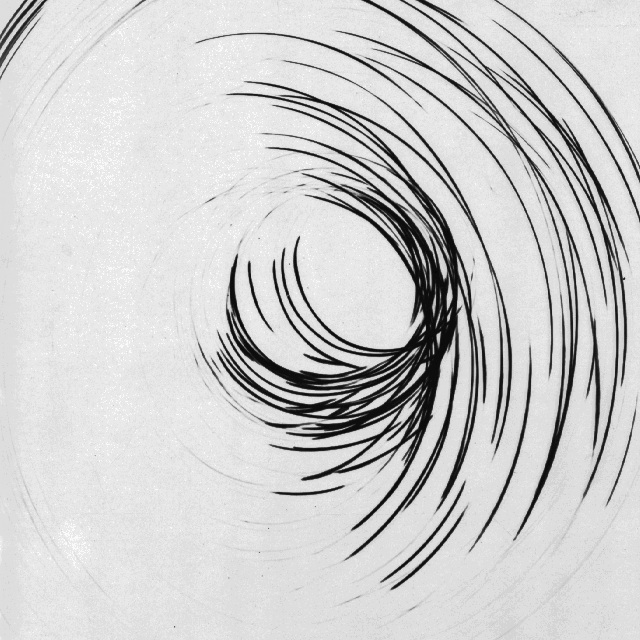}
       \includegraphics[width=0.2 \columnwidth,keepaspectratio]{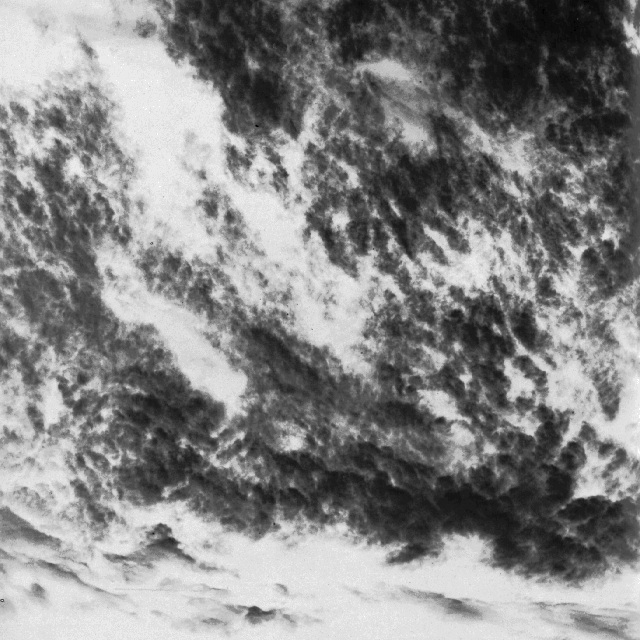}
       \caption{\small Examples from the Brodatz texture dataset~\cite{Brodatz_Dataset}.}
      \label{fig:Texture_example}
\end{figure}

\begin{figure}[!tb]
      \begin{minipage}{0.05\columnwidth}
        \rotatebox[origin=l]{90}{\scriptsize Average recognition accuracy}
      \end{minipage}
      \begin{minipage}{0.95\columnwidth}
        \centering
        \includegraphics[width=\columnwidth,keepaspectratio]{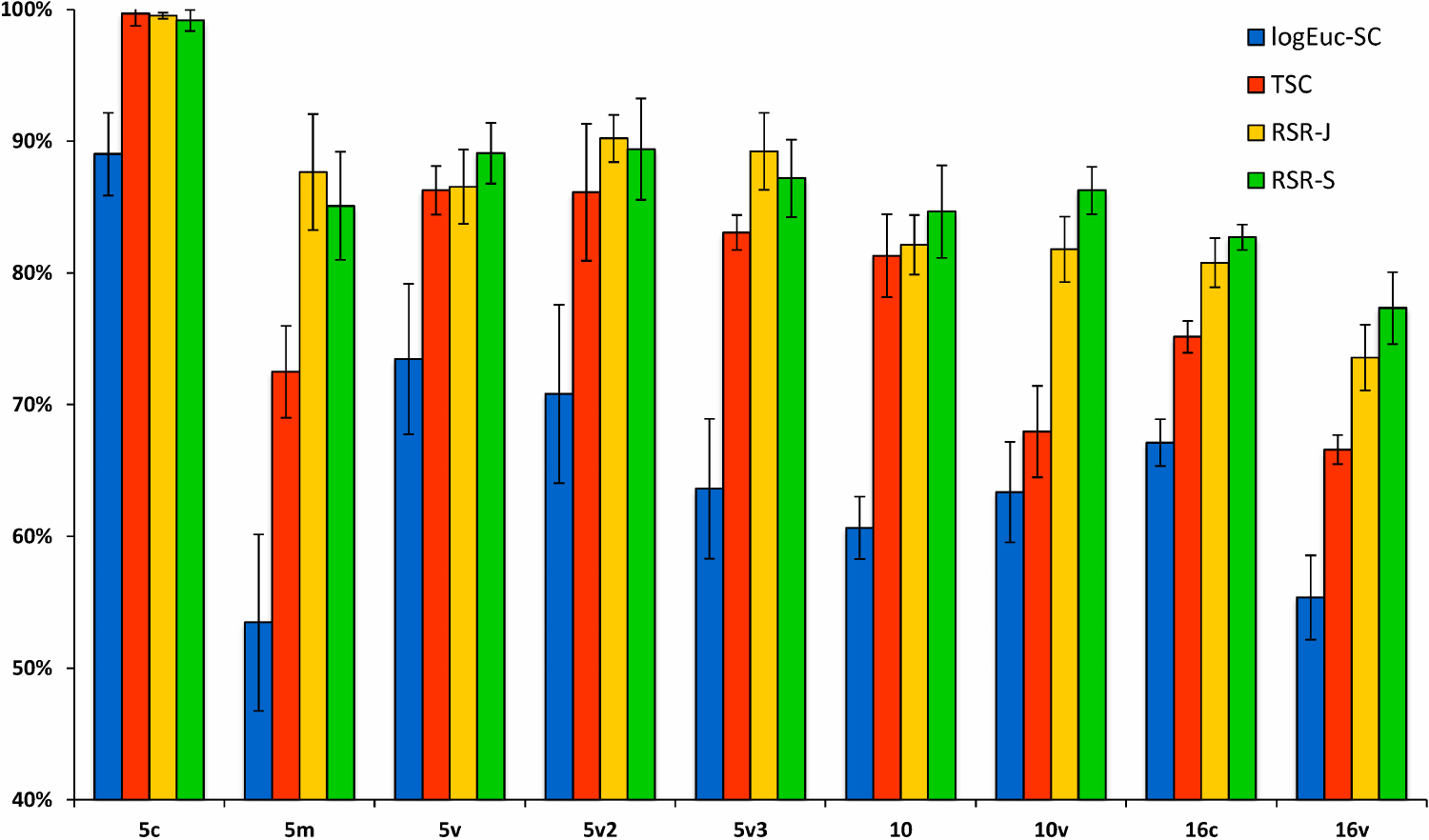}
        {\scriptsize Test ID}
      \end{minipage}
      \caption
        {
        \small
        Performance on the Brodatz texture dataset~\cite{Brodatz_Dataset} using
        log-Euclidean sparse representation (logEuc-SC)~\cite{Guo_TIP13},
        Tensor Sparse Coding (TSC)~\cite{TSC_PAMI_2014}
        and
        the proposed \mbox{RSR-J} and \mbox{RSR-S} approaches.
        The black bars indicate standard deviations.        
        }
    \label{fig:Texture_results}
\end{figure}

\subsection{Dictionary Learning}
\label{exp_dic_learning}
Here we analyse the performance of the proposed dictionary learning techniques as described in \textsection~\ref{sec:dic_learning} on two classification tasks:
texture classification and action recognition.

\subsubsection{Texture Classification}
\label{sec:exp_dic_texture}
Here we consider a multi-class classification problem,
using 111 texture images of the Brodatz texture dataset~\cite{Brodatz_Dataset}.
From each image we randomly extracted 50 blocks of size {\small $32 \times 32$}.
To train the dictionary, 20 blocks from each image were randomly selected,
resulting in a dictionary learning problem with 2200 samples.
From the remaining blocks, 20 per image were used as probe data and 10 as gallery samples.
The process of random block creation and dictionary generation was repeated twenty times.
The average recognition accuracies over probe data are reported here.
In the same manner as in~\textsection~\ref{sec:exp_texture_rec},
we used the feature vector
$\Vec{f}(x, y)  =
  \Big(
    I\left(x,y\right),
    \left| \frac{\partial I}  {\partial x}  \right|,  \left|\frac{\partial I}  {\partial y}  \right|,
    \left| \frac{\partial^2 I}{\partial x^2}\right|,  \left|\frac{\partial^2 I}{\partial y^2}\right|
  \Big)^T$
to create the covariance,
where the first dimension is the grayscale intensity,
and the remaining dimensions capture first and second order gradients.

We used the proposed methods to obtain the sparse codes,
coupled with a dictionary generated via two separate methods:
intrinsic {\it k}-means,
and the proposed learning algorithm~(\textsection~\ref{sec:dic_learning}).
The sparse codes were then classified using a nearest-neighbor classifier.

Figure~\ref{fig:dict_texture} shows the performance of \mbox{RSR-J} and \mbox{RSR-S} for various dictionary sizes.
Red curves show the performance when the intrinsic $k$-means algorithm was utilized for dictionary learning.
The blue curves demonstrate the recognition accuracies when the methods proposed in~\textsection~\ref{sec:dic_learning} were used for training,
and finally the green curves show the performance of log-Euclidean sparse coding equipped with K-SVD~\cite{Aharon_2006}
algorithm for dictionary learning.
The figures show that the proposed dictionary learning approach consistently outperforms $k$-means bar one case (\mbox{RSR-J} for dictionary size 8).
Using the proposed dictionary learning approach, \mbox{RSR-J} achieves the maximum recognition accuracy of $63.3\%$ with 104 atoms,
while \mbox{RSR-S} obtains the maximum accuracy of $60.6\%$ with 24 atoms.
In contrast, when intrinsic $k$-means is used for dictionary learning,
the maximum recognition accuracies for \mbox{RSR-J} and \mbox{RSR-S} are $52.9\%$ and $53.2\%$, respectively.
Furthermore, in all cases \mbox{RSR-S} is superior to the log-Euclidean solution, 
while \mbox{RSR-J} performs better than the log-Euclidean approach only for dictionaries with size larger than 24 atoms.

\def \DIC_SIZE_FIG {0.35}
\begin{figure*}[!tb]\centering
	\begin{subfigure}{\DIC_SIZE_FIG \textwidth}
		\includegraphics[width= 1.0 \columnwidth,keepaspectratio]{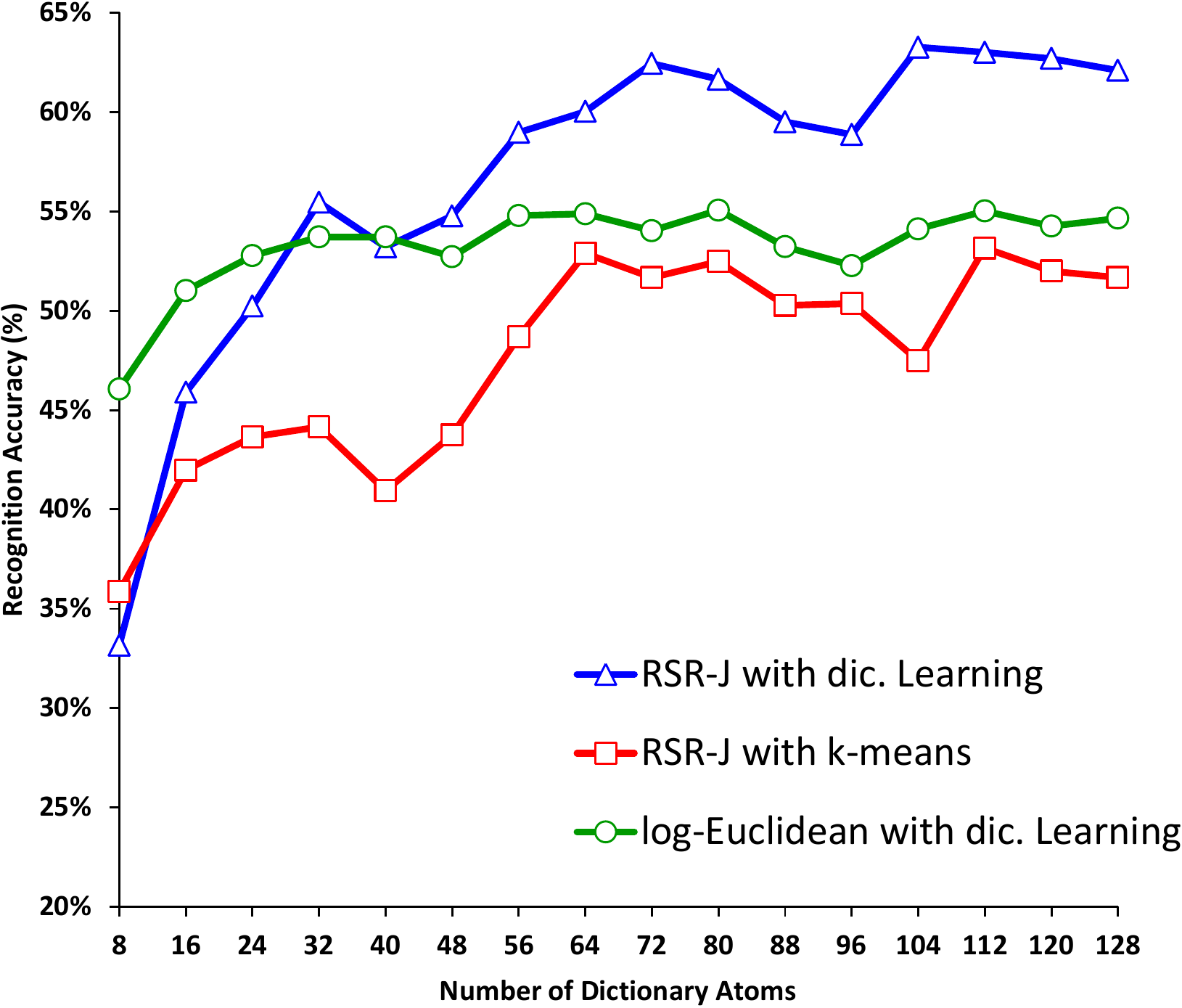}
		\caption{Sample images}
	\end{subfigure}	
	~~
	\begin{subfigure}{\DIC_SIZE_FIG \textwidth}
		\includegraphics[width = \columnwidth,keepaspectratio]{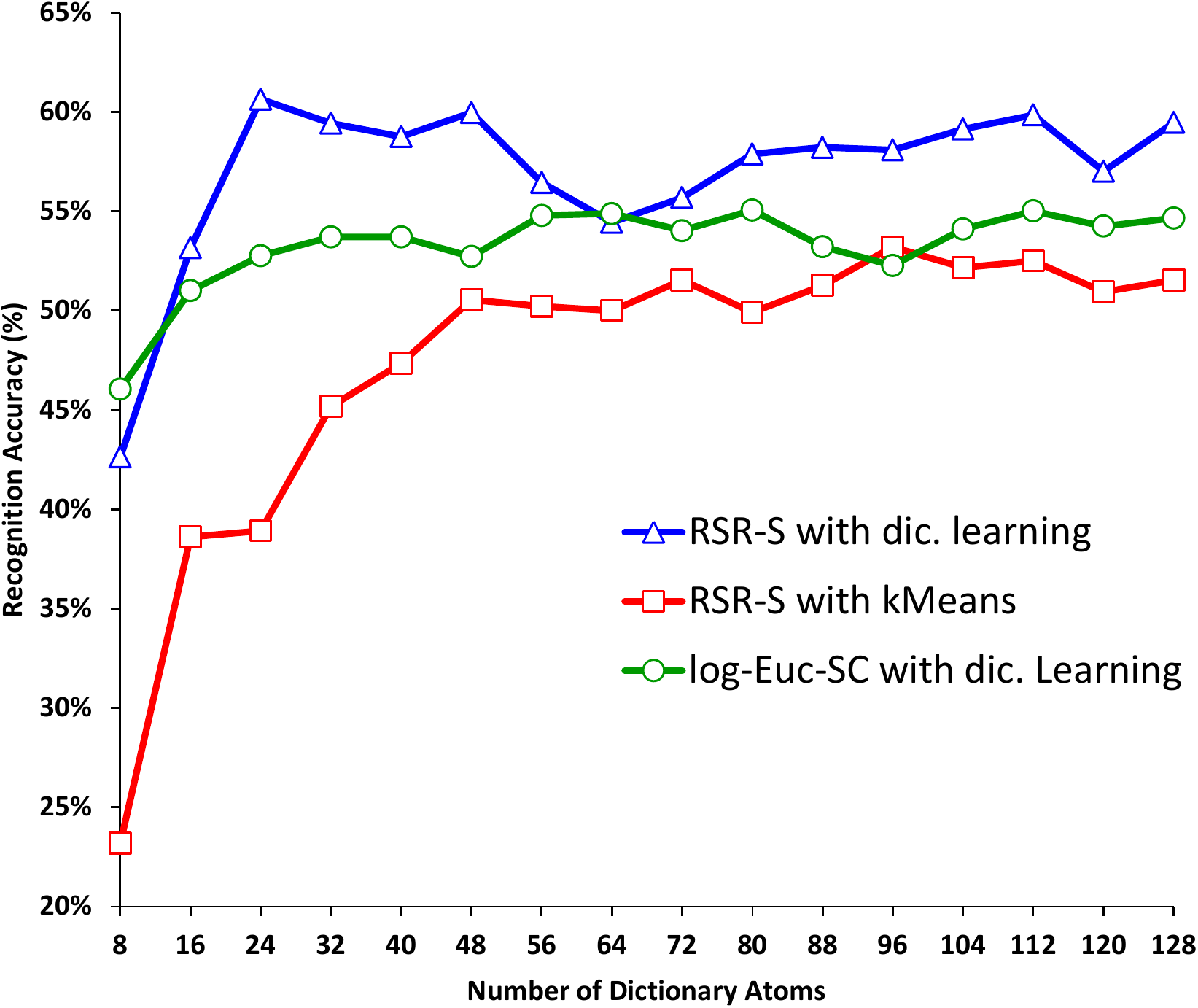}
		\caption{Seq.~\#1}
	\end{subfigure}
	\caption
        {\small
         Comparison of recognition accuracy versus size of dictionary for \mbox{RSR-J} and \mbox{RSR-S}. 
         The red curve shows the accuracy for dictionaries learned by intrinsic $k$-means algorithm. 
         The green curve shows the accuracy for dictionaries learned by log-Euclidean method, that is dictionary learning (K-SVD)
         along sparse coding on the identity tangent space. 
         The blue curve shows the accuracy for the proposed learning approach.
        }
    \label{fig:dict_texture}
\end{figure*}

\subsubsection{Action Recognition}
\label{sec:exp_dic_action}
The UCF sport action dataset~\cite{UCF_DATASET} consists of ten categories
of human actions including swinging on the pommel
horse, driving, kicking, lifting weights, running, skateboarding,
swinging at the high bar, swinging golf clubs,
and walking (examples of a diving action are shown in Fig.~\ref{fig:UCF_example}). The number of videos for each action varies
from 6 to 22 and there are 150 video sequences in total.
Furthermore, the videos presented in this dataset have non-uniform backgrounds and both the camera and the subject are moving in some actions.
Frames in all video sequences are cropped according to the region of interest provided with the dataset and then resized to {\small $64 \times 64$}.
The standard protocol in this dataset is the
leave-one-out (LOO) cross validation~\cite{UCF_DATASET,HDN_CVPR_2010,AFMLK_CVPR_2011}.

From each video, we extracted several RCMs by splitting video data into 3D volumes.
Volumes had the size of $32 \times 32 \times 15$ in $x-y-t$ domains with 8 pixels shift in each direction.
From each volume, a $12 \times 12$ RCM was extracted using kinematic features described in~\cite{Guo_TIP13}.
From training RCMs, we learned separate dictionaries for $J$ and $S$ divergences using the methods described 
in \textsection~\ref{sec:dic_learning} with 256 atoms each. The dictionaries were then used to determine the
video descriptor. To this end, each video was described by simply pooling the
sparse codes of its $32 \times 32 \times 15$ volumes using max operator. Having training and testing descriptors at our disposal,
a linear SVM~\cite{Bishop_2006} was used as classifier. 

\def\UCFSIZE {0.3}
\begin{figure}[!tb]
      \begin{minipage}{1.0\columnwidth}
      \center
        \includegraphics[width=\UCFSIZE\textwidth,keepaspectratio]{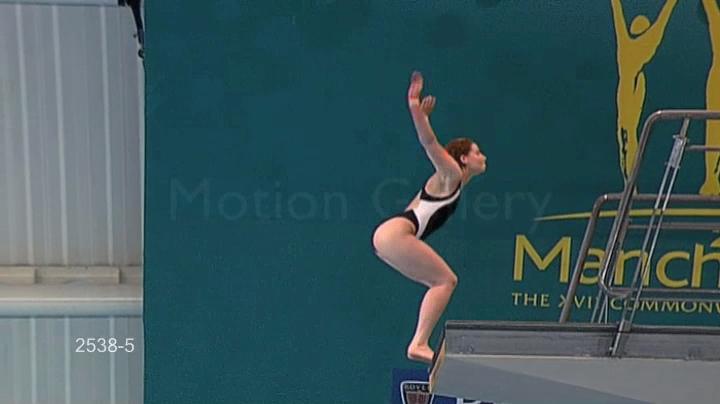}
        \includegraphics[width=\UCFSIZE\textwidth,keepaspectratio]{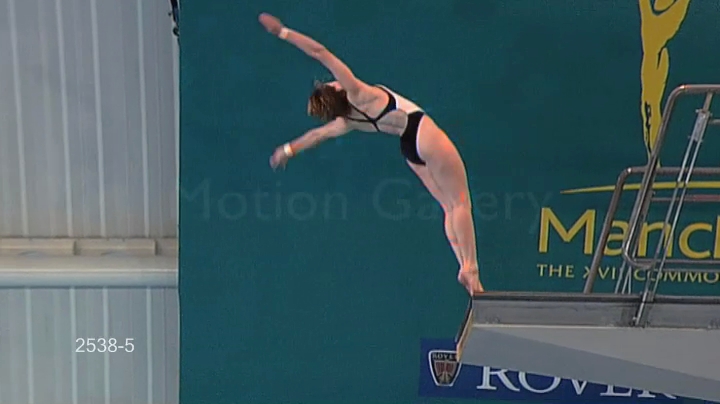}
        \includegraphics[width=\UCFSIZE\textwidth,keepaspectratio]{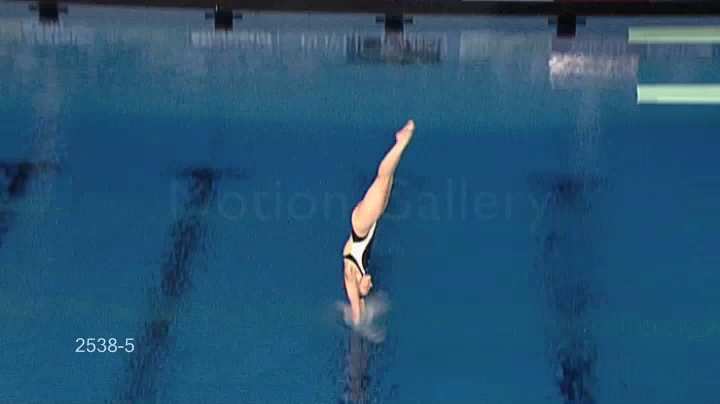}
      \end{minipage}
  \caption
    {
    \small
    Examples from the UCF sport action dataset~\cite{UCF_DATASET}.
    }
  \label{fig:UCF_example}
\end{figure}%
In Table~\ref{tab:table_ucf_performance},
the overall performance of the \mbox{RSR-J} and \mbox{RSR-S} methods is compared against three state-of-the-art Euclidean approaches:
HOG3D~\cite{HOG3D:2009},
Hierarchy of Discriminative space-time Neighbourhood features (HDN)~\cite{HDN_CVPR_2010},
and augmented features~\cite{AFMLK_CVPR_2011} in conjunction with multiple kernel learning (AFMKL).
HOG3D is an extension of histogram of oriented gradient descriptor~\cite{Dalal_Triggs_2005} to spatio-temporal spaces.
HDN learns shapes of space-time feature neighbourhoods that are most discriminative
for a given action category. The idea is to form new features composed of the neighbourhoods around the
interest points in a video.
AFMKL exploits appearance distribution features and spatio-temporal context features in a learning scheme for action recognition.
As shown in Table~\ref{tab:table_ucf_performance},
\mbox{RSR-J} outperforms the log-Euclidean approach and is marginally worse than AMFKL.
\mbox{RSR-S} achieves the highest overall accuracy.

\begin{table}[!tb] \small
\caption
    {
    Recognition accuracy (in \%) for the UCF action recognition dataset using
    HOG3D, HDN~\cite{HDN_CVPR_2010}, AFMKL~\cite{AFMLK_CVPR_2011}
    and the proposed \mbox{RSR-J} and \mbox{RSR-S} approaches.
    }
  \centering
    \begin{tabular}{lc} 
    \toprule
    {\bf Method}  &{\bf Recognition Accuracy} \\
    \toprule
    {\bf HOG3D~\cite{HOG3D:2009}}                               &$85.6$ \\
    {\bf HDN~\cite{HDN_CVPR_2010}}                              &$87.3$ \\
    {\bf AFMKL~\cite{AFMLK_CVPR_2011}}                          &$91.3$ \\
    {\bf \mbox{logEuc-SC with dic. learning}}          			&$89.3$ \\
    {\bf \mbox{RSR-J}}                          				&$90.7$ \\
    {\bf \mbox{RSR-S}}                          				&$\bf 94.0$ \\    
    \bottomrule
    \end{tabular}
    \label{tab:table_ucf_performance}
\end{table}

The confusion matrices for \mbox{RSR-J} and \mbox{RSR-S} divergences  are shown in
Tables~\ref{tab:ucf_confusion_matrix_j} and~\ref{tab:ucf_confusion_matrix_s}, respectively.
\mbox{RSR-J} perfectly classifies the actions of diving, golf swinging, kicking, riding horse, high bar swinging, walking and lifting, while 
\mbox{RSR-S} achieves perfect classification on golf swinging, riding horse, running, high bar swinging and lifting. Nevertheless, the overall 
performance of \mbox{RSR-S} surpasses that of \mbox{RSR-J} since \mbox{RSR-J} performs poorly in classifying the pommel-horse action.

\def \UCF_TBL_SIZE {0.3}
\begin{table*}[!tb]\scriptsize  
  \centering
  	\begin{minipage}{0.485 \textwidth}  \centering
  	\caption    { Confusion matrix (in \%) for the \textbf{\mbox{RSR-J}} method on the UCF sport action dataset using LOO protocol.}	
    \begin{tabular}{l m{\UCF_TBL_SIZE cm} m{\UCF_TBL_SIZE cm} m{\UCF_TBL_SIZE cm} m{\UCF_TBL_SIZE cm}
                      m{\UCF_TBL_SIZE cm} m{\UCF_TBL_SIZE cm} m{\UCF_TBL_SIZE cm} m{\UCF_TBL_SIZE cm} 
                      m{\UCF_TBL_SIZE cm} m{\UCF_TBL_SIZE cm}}
    \toprule
    {\bf}  &{\bf D} &{\bf GS} &{\bf K} &{\bf RH} &{\bf R} &{\bf S}
    &{\bf PH} &{\bf HS}    &{\bf W}  &{\bf L}\\
    \toprule
    {\bf D}                       &$\bf 100$   &$0$    &$0$    &$0$    &$0$    &$0$ &$0$    &$0$    &$0$    &$0$\\
    {\bf GS}                   &$0$   &$\bf 100$    &$0$    &$0$    &$0$    &$0$ &$0$    &$0$ &$0$    &$0$\\
    {\bf K}                  	   &$0$   &$0$    &$\bf 100$    &$0$    &$0$    &$0$ &$0$    &$0$ &$0$    &$0$\\
    {\bf RH}             	   &$0$   &$0$    &$0$    &$\bf 100$    &$0$    &$0$ &$0$    &$0$ &$0$    &$0$\\
    {\bf R}                      &$0$   &$0$    &$0$    &$0$    &$\bf 91.7$    &$0$   &$0$ &$0$ &$0$    &$8.3$\\
    {\bf S}                      &$0$   &$0$    &$23.1$  &$0$ &$0$ &$\bf 69.2$    &$0$   &$7.7$    &$0$  &$0$\\
    {\bf PH}                      &$0$    &$16.65$ &$16.65$    &$0$ &$0$   &$0$    &$\bf 25.0$    &$0$    &$0$    &$41.7$\\
    {\bf HS}                     &$0$    &$0$ &$0$    &$0$ &$0$   &$0$    &$0$    &$\bf 100$    &$0$    &$0$\\
    {\bf W}                      &$0$ &$0$    &$0$ &$0$    &$0$ &$0$    &$0$    &$0$    &$\bf 100$    &$0$\\
    {\bf L}                      &$0$    &$0$ &$0$    &$0$ &$0$   &$0$    &$0$    &$0$    &$0$    &$\bf 100$\\
    \bottomrule
    \label{tab:ucf_confusion_matrix_j}
    \end{tabular}    
  	\end{minipage}
    \quad
    \begin{minipage}{0.485 \textwidth}  \centering
  	\caption    { Confusion matrix (in \%) for the \textbf{\mbox{RSR-S}} method on the UCF sport action dataset using LOO protocol.}	    
	\begin{tabular}{l m{\UCF_TBL_SIZE cm} m{\UCF_TBL_SIZE cm} m{\UCF_TBL_SIZE cm} m{\UCF_TBL_SIZE cm}
                      m{\UCF_TBL_SIZE cm} m{\UCF_TBL_SIZE cm} m{\UCF_TBL_SIZE cm} m{\UCF_TBL_SIZE cm} 
                      m{\UCF_TBL_SIZE cm} m{\UCF_TBL_SIZE cm}}
    \toprule
    {\bf}  &{\bf D} &{\bf GS} &{\bf K} &{\bf RH} &{\bf R} &{\bf S}
    &{\bf PH} &{\bf HS}    &{\bf W}  &{\bf L}\\
    \toprule
    {\bf D}                       &$\bf 92.9$   &$0$    &$0$    &$0$    &$0$    &$0$ &$0$    &$7.1$    &$0$    &$0$\\
    {\bf GS}                   &$0$   &$\bf 100$    &$0$    &$0$    &$0$    &$0$ &$0$    &$0$ &$0$    &$0$\\
    {\bf K}                  	   &$0$   &$0$    &$\bf 95.0$    &$0$    &$15$    &$0$ &$0$    &$5.0$ &$0$    &$0$\\
    {\bf RH}             	   &$0$   &$0$    &$0$    &$\bf 100$    &$0$    &$0$ &$0$    &$0$ &$0$    &$0$\\
    {\bf R}                      &$0$   &$0$    &$0$    &$0$    &$\bf 100$    &$0$   &$0$ &$0$ &$0$    &$0$\\
    {\bf S}                      &$7.7$   &$0$    &$15.4$  &$0$ &$0$ &$\bf 69.2$    &$7.7$   &$0$    &$0$  &$0$\\
    {\bf P}                      &$0$    &$0$ &$0$    &$0$ &$0$   &$0$    &$\bf 83.3$    &$0$    &$0$    &$16.7$\\
    {\bf H}                     &$0$    &$0$ &$0$    &$0$ &$0$   &$0$    &$0$    &$\bf 100$    &$0$    &$0$\\
    {\bf W}                      &$7.7$ &$0$    &$0$ &$0$    &$0$ &$0$    &$0$    &$0$    &$\bf 92.3$    &$0$\\
    {\bf L}                      &$0$    &$0$ &$0$    &$0$ &$0$   &$0$    &$0$    &$0$    &$0$    &$\bf 100$\\
    \bottomrule
    \label{tab:ucf_confusion_matrix_s}
    \end{tabular}
    \end{minipage}    
\end{table*}

\section{Main Findings and Future Work}
\label{sec:future_work}

With the aim of addressing sparse representation on SPD manifolds,
we proposed to seek the solution through embedding the manifolds into RKHS
with the aid of two Bregman divergences, namely Stein and Jeffrey divergences.
This led to a relaxed and extended version of the Lasso problem~\cite{ELAD_SR_BOOK_2010} on SPD manifolds.

In Euclidean spaces, the success of many learning algorithms arises from their use of kernel methods~\cite{Shawe-Taylor2004book}.
Therefore, one could expect embedding a Riemannian manifold into higher dimensional Reproducing Kernel Hilbert Space (RKHS), where linear geometry applies, facilitates inference.
Such an embedding, however, requires a non-trivial kernel function defined on the manifold, which, 
according to Mercer's theorem~\cite{Shawe-Taylor2004book}, must be positive definite.
The approach introduced here attains its merit from the following facts:

\begin{itemize}

\item
By recasting the sparse coding from $\SPD{n}$ into RKHS, a convex problem is obtained which can be solved quite efficiently.
The sparse coding problem is in effect linearized, which is far easier than solving the Riemannian version of sparse coding as depicted in Eqn.~\eqref{eqn:Riemannian_sc}.
 
\item
Recasting the sparse coding from $\SPD{n}$ into RKHS exploits the advantages of higher dimensional Hilbert spaces,
such as easier separability of classes.

\item
The $J$ and $S$ divergences used in this paper are closely related to the Affine Invariant Riemannian Metric (AIRM)~\cite{Pennec_IJCV_2006},
and have several useful properties such as invariance to inversion and affine transforms.
However, unlike AIRM, the $J$ and $S$ divergences admit a Hilbert space embedding (\ie, can be converted to kernel functions).

\end{itemize}

Experiments on several classification tasks 
show that the proposed approaches achieve notable improvements in discrimination accuracy,
in comparison to state-of-the-art methods such as tensor sparse coding~\cite{TSC_PAMI_2014}.
We conjuncture that this stems from better exploitation of Riemannian geometry,
as both divergences enjoy several properties similar to affine invariant Riemannian metric on SPD manifolds.

We have furthermore proposed algorithms for learning a dictionary,
closely tied to the Stein and Jeffrey divergences.
The experiments show that in many cases better performance is achieved with \mbox{RSR-S} as compared to \mbox{RSR-J}.
However, we note that Jeffrey divergence enjoys several unique properties (\eg, closed form solution for averaging 
and Hilbert space embedding for all values of $\beta$ in Eqn.~\eqref{eqn:kernel_j_div}) which makes it attractive for analyzing SPD matrices. 
Future venues of exploration include devising other types of inference and machineries based on $J$ and $S$ divergences, such as structured learning.

\appendix
\subsection*{Geometric mean of J-Divergence}
\label{app:geometric_mean}

\begin{theorem}\label{thm:kld_airm_mean}
	For two matrices $\Mat{A}, \Mat{B} \in \SPD{n}$, the geometric mean of $J$ divergence $\Mat{A} \sharp_{J} \Mat{B}$
	and AIRM $\Mat{A} \sharp_{R} \Mat{B}$ are the same.	
\begin{proof}
For the $J$~divergence, we note that
\begin{small}
	\begin{align*}
		\frac{\partial\{J(\Mat{X},\Mat{A}) + J(\Mat{X},\Mat{B})\}}{\partial{\Mat{X}}} &= \frac{1}{2}
		\Big(\Mat{A}^{-1} + \Mat{B}^{-1}
		-\Mat{X}^{-1}(\Mat{A}+\Mat{B})\Mat{X}^{-1}\Big).
	\end{align*}
\end{small}
Therefore, $\Mat{A} \sharp_{J} \Mat{B}$ is the solution of 
\begin{align}
	\Mat{X}(\Mat{A}^{-1} + \Mat{B}^{-1})\Mat{X} = \Mat{A}+\Mat{B}, 
	\label{eqn:J_riccati_eqn}
\end{align}
which is a \textit{Riccati} equation with only one positive definite solution~\cite{BHATIA_2007}. We note that
\begin{align}
	\Mat{A} \sharp_{R} \Mat{B} &= \exp_{\Mat{B}}\Big(\frac{1}{2}\log_{\Mat{B}}(\Mat{A})\Big) = 
	\exp_{\Mat{A}}\Big(\frac{1}{2}\log_{\Mat{A}}(\Mat{B})\Big)\notag\\
	&= \Mat{A}^{\frac{1}{2}}\Big(\Mat{A}^{-\frac{1}{2}}\Mat{B}\Mat{A}^{-\frac{1}{2}}\Big)^{\frac{1}{2}}\Mat{A}^{\frac{1}{2}}
	. 
	\label{eqn:airm_mean_eqn}
\end{align}
It can be readily shown that $\Mat{A} \sharp_{R} \Mat{B}$ satisfies Eqn.~\eqref{eqn:J_riccati_eqn} which concludes the proof. 
\end{proof}	
\end{theorem}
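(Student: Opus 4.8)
The plan is to realize $\Mat{A} \sharp_{J} \Mat{B}$ as the unique minimizer of the aggregate $J$-divergence objective $g(\Mat{X}) = J(\Mat{X},\Mat{A}) + J(\Mat{X},\Mat{B})$, reduce its first-order optimality condition to a matrix equation, and then verify that the AIRM mean is precisely the solution of that equation. First I would differentiate $g$. Invoking the gradient identity $\nabla_{\Mat{X}} J(\Mat{X},\Mat{Y}) = \tfrac{1}{2}(\Mat{Y}^{-1} - \Mat{X}^{-1}\Mat{Y}\Mat{X}^{-1})$ from Eqn.~\eqref{eqn:gradient_J} and summing the two terms gives
\begin{align*}
	\nabla_{\Mat{X}}\, g(\Mat{X}) = \tfrac{1}{2}\big( \Mat{A}^{-1} + \Mat{B}^{-1} - \Mat{X}^{-1}(\Mat{A}+\Mat{B})\Mat{X}^{-1} \big).
\end{align*}
Setting this to zero and conjugating by $\Mat{X}$ on both sides yields the Riccati equation $\Mat{X}(\Mat{A}^{-1}+\Mat{B}^{-1})\Mat{X} = \Mat{A}+\Mat{B}$, which is known to possess a unique positive-definite root~\cite{BHATIA_2007}. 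Hence $\Mat{A} \sharp_{J} \Mat{B}$ is characterized as that root.

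The crux is then to confirm that the AIRM mean $\Mat{G} = \Mat{A} \sharp_{R} \Mat{B} = \Mat{A}^{1/2}(\Mat{A}^{-1/2}\Mat{B}\Mat{A}^{-1/2})^{1/2}\Mat{A}^{1/2}$ satisfies this same Riccati equation. I would establish the two classical identities $\Mat{G}\Mat{A}^{-1}\Mat{G} = \Mat{B}$ and $\Mat{G}\Mat{B}^{-1}\Mat{G} = \Mat{A}$. Setting $\Mat{C} = \Mat{A}^{-1/2}\Mat{B}\Mat{A}^{-1/2}$ so that $\Mat{G} = \Mat{A}^{1/2}\Mat{C}^{1/2}\Mat{A}^{1/2}$, a direct cancellation gives $\Mat{G}\Mat{A}^{-1}\Mat{G} = \Mat{A}^{1/2}\Mat{C}\Mat{A}^{1/2} = \Mat{B}$, and the companion identity follows by the symmetry $\Mat{A} \sharp_{R} \Mat{B} = \Mat{B} \sharp_{R} \Mat{A}$. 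Adding the two identities produces exactly $\Mat{G}(\Mat{A}^{-1}+\Mat{B}^{-1})\Mat{G} = \Mat{B} + \Mat{A}$, so $\Mat{G}$ is a positive-definite solution of the Riccati equation.

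Finally, since the Riccati equation has only one positive-definite solution and both $\Mat{A} \sharp_{J} \Mat{B}$ and $\Mat{A} \sharp_{R} \Mat{B}$ are such solutions, the two means must coincide, which concludes the argument. I expect the main obstacle to be the verification in the second paragraph, but it collapses to the standard characterization of the geometric mean as the solution of $\Mat{X}\Mat{A}^{-1}\Mat{X} = \Mat{B}$; once that identity is in hand, everything else is routine.
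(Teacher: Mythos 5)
Your proposal is correct and follows essentially the same route as the paper: differentiate the summed $J$-divergence via Eqn.~\eqref{eqn:gradient_J}, reduce the stationarity condition to the Riccati equation $\Mat{X}(\Mat{A}^{-1}+\Mat{B}^{-1})\Mat{X}=\Mat{A}+\Mat{B}$, invoke uniqueness of its positive definite root, and check that the AIRM mean satisfies it. The only difference is that you explicitly carry out the verification step (via $\Mat{G}\Mat{A}^{-1}\Mat{G}=\Mat{B}$ and $\Mat{G}\Mat{B}^{-1}\Mat{G}=\Mat{A}$) that the paper dismisses as ``readily shown,'' which is a welcome addition rather than a deviation.
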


\subsection*{Kernelized Feature Sign Algorithm}
\label{app:kernel_feature_sign_alg}

The efficiency of the Feature-Sign Search algorithm~\cite{NIPS2006_NG}
for finding sparse codes in vector spaces has been analysed in~\cite{lee2010:thesis}.
The algorithm was shown to outperform (in terms of speed and accuracy)
sparse solvers such as the generic QP solver~\cite{CVX}.
The gain is even higher for large and high-dimensional datasets (which are common in computer vision tasks).
As such, we have elected to recast the algorithm into its RKHS version,
in order to find the sparse codes on SPD manifolds.
We summarize the new version below, with the pseudo-code shown in Algorithm~\ref{alg:pseudocode_Feature_Sign_Search}. 

Given the objective function defined in Eqn.~\eqref{eqn:kernel_sparse_coding},
if the signs (positive, zero, or negative) of the $y_j$ are known at the optimal value,
each term of \mbox{$\left\|\Vec{y} \right\|_1=\sum\nolimits_{j=1}^{N}|y_i|$}
can be replaced by either {$y_i$}, {$-y_i$} or $0$.
Considering only non-zero coefficients,
this reduces Eqn.~\eqref{eqn:kernel_sparse_coding} to a standard,
unconstrained quadratic optimization problem (QP)~\cite{Boyd:2004},
which has an analytic solution.

The Feature-Sign Algorithm comprises of four basic steps.
The first two steps can be considered as a greedy search.
The search is directed towards selecting a new feature
that maximizes the rate of decrements in  Eqn.~\eqref{eqn:kernel_sparse_coding}.
This is accomplished by computing the first order derivative of Eqn.~\eqref{eqn:kernel_sparse_coding}
with respect to features, \ie,

\begin{align}
  	\frac{\partial}{\partial y_j} l_\phi(\Mat{X},\mathbb{D})
  	= \sum\limits_{q=1}^{N}{y_qk(\Mat{D}_q,\Mat{D}_j)}-2k(\Mat{X},\Mat{D}_j) + \lambda\;.
\end{align}

Since an estimate of the feature signs is available,
in the third step (feature-sign step) it is possible to find the solution of the unconstrained QP of
$\underset{\widehat{\Vec{y}}}{\min} \widetilde{f}(\widehat{\Vec{y}})$,
where

\begin{align}
    \widetilde{f}(\widehat{\Vec{y}})
   	&=
    \big\| \phi(\Mat{X}) -\hspace{-3ex} \sum\limits_{i \in \mathit{active\_set}}{\hspace{-3ex}y_i\phi(\Mat{D}_i)} \big\|^2
    + \lambda \widehat{\Vec{\theta}}^T \widehat{\Vec{y}} \nonumber \\
    	& =
    k(\Mat{X},\Mat{X})-2\widehat{\Vec{y}}^T\widehat{\Vec{\mathcal{K}}}
    +\widehat{\Vec{y}}^T \widehat{\Mat{\mathbb{K}}} \widehat{\Vec{y}}
    + \lambda \widehat{\Vec{\theta}}^T \widehat{\Vec{y}}\;.
    \label{eqn:Opt_Grass_sign}
\end{align}

In Eqn.~\eqref{eqn:Opt_Grass_sign},	$\widehat{\Vec{y}} = \Vec{y}(\mathit{active\_set})$,
$\widehat{\Vec{\theta}} = \Vec{\theta}(\mathit{active\_set})$
and	$\widehat{\Vec{\mathcal{K}}} = \Vec{\mathcal{K}}(\mathit{active\_set})$
are subvectors corresponding to  $\mathit{active\_set}$.
Similarly,	\mbox{ $\widehat{\Mat{\mathbb{K}}}=\Mat{\mathbb{K}}(\mathit{active\_set}, \mathit{active\_set})$}
is a submatrix corresponding to {$\mathit{active\_set}$}
with {$\mathit{active\_set}$} being the subset of selected features.
The closed form solution, \ie,
\mbox{$\widehat{\Mat{\mathbb{K}}}^{-1}(\widehat{\Vec{\mathcal{K}}}-\frac{\lambda \widehat{\Vec{\theta}}}{2})$},
can be derived by computing the root of first order derivative of Eqn.~\eqref{eqn:Opt_Grass_sign}
with respect to {$\widehat{\Vec{y}}$}.
The final step of the algorithm is an optimality check
to verify that the features and the corresponding signs
are truly consistent with the objective function in Eqn.~\eqref{eqn:kernel_sparse_coding}.

The convergence of the feature sign algorithm has been discussed for Euclidean spaces in~\cite{NIPS2006_NG}
and can be readily extended to the kernelized case.
To this end we first show that the feature-sign steps always strictly reduces the objective function 
in Eqn.~\eqref{eqn:kernel_sparse_coding_app}.

\begin{lemma}
	Each run of \emph{feature-sign step} in Algorithm~\ref{alg:pseudocode_Feature_Sign_Search}
	is guaranteed to strictly reduce the objective function.
	\label{lemma:convergence1}
\end{lemma}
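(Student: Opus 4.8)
The plan is to mirror the convergence argument of Lee \etal~\cite{NIPS2006_NG} for the Euclidean feature-sign search, replacing the Euclidean Gram matrix $\mathbb{D}^T\mathbb{D}$ by the kernel Gram matrix $\widehat{\Mat{\mathbb{K}}}$ throughout. Write $l_\phi(\widehat{\Vec{y}})$ for the value of the (un-minimized) objective of Eqn.~\eqref{eqn:kernel_sparse_coding} at a code $\widehat{\Vec{y}}$ supported on the active set. The first ingredient I would establish is that the linearized surrogate $\widetilde{f}$ of Eqn.~\eqref{eqn:Opt_Grass_sign} is \emph{strictly} convex in $\widehat{\Vec{y}}$. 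Since $k(\cdot,\cdot)$ is a reproducing kernel, the full Gram matrix $\mathbb{K}(\mathbb{D},\mathbb{D})$ is positive definite, and hence so is every principal submatrix $\widehat{\Mat{\mathbb{K}}}$ corresponding to the active set. Strict convexity then guarantees that $\widetilde{f}$ has a unique minimizer, namely the closed-form point $\widehat{\Vec{y}}_{\mathrm{new}} = \widehat{\Mat{\mathbb{K}}}^{-1}(\widehat{\Vec{\mathcal{K}}} - \tfrac{\lambda}{2}\widehat{\Vec{\theta}})$ already derived above.

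The second, and central, ingredient is a comparison between $l_\phi$ and its surrogate $\widetilde{f}$. Because $\widehat{\Vec{\theta}}$ has entries in $\{-1,+1\}$, we have $\|\widehat{\Vec{y}}\|_1 = \sum_i |y_i| \ge \sum_i \theta_i y_i = \widehat{\Vec{\theta}}^T\widehat{\Vec{y}}$ for every $\widehat{\Vec{y}}$, with equality precisely when each nonzero $y_i$ carries the sign $\theta_i$. Since the quadratic parts of $l_\phi$ and $\widetilde{f}$ coincide and $\lambda>0$, this yields $l_\phi \ge \widetilde{f}$ everywhere, and $l_\phi = \widetilde{f}$ on the closed orthant consistent with $\widehat{\Vec{\theta}}$. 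In particular, since the current iterate $\widehat{\Vec{y}}_c$ entering the feature-sign step is sign-consistent with $\widehat{\Vec{\theta}}$ by construction, $l_\phi(\widehat{\Vec{y}}_c) = \widetilde{f}(\widehat{\Vec{y}}_c)$; and because the step is invoked only when $\widehat{\Vec{y}}_c$ is not the minimizer of $\widetilde{f}$, strict convexity gives $\widetilde{f}(\widehat{\Vec{y}}_{\mathrm{new}}) < \widetilde{f}(\widehat{\Vec{y}}_c)$.

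The final step is the line-search argument that converts the strict decrease of $\widetilde{f}$ into a strict decrease of $l_\phi$. I would parametrize the segment $\widehat{\Vec{y}}(t) = (1-t)\widehat{\Vec{y}}_c + t\,\widehat{\Vec{y}}_{\mathrm{new}}$ for $t \in [0,1]$ and let $t^\star$ be the first value at which some coordinate crosses zero (setting $t^\star = 1$ if no crossing occurs). On $[0,t^\star]$ the point $\widehat{\Vec{y}}(t)$ remains in the orthant consistent with $\widehat{\Vec{\theta}}$, so $l_\phi(\widehat{\Vec{y}}(t)) = \widetilde{f}(\widehat{\Vec{y}}(t))$, and by strict convexity this is strictly decreasing in $t$ as we move toward $\widehat{\Vec{y}}_{\mathrm{new}}$. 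Since the feature-sign line search evaluates $l_\phi$ at $\widehat{\Vec{y}}_{\mathrm{new}}$ and at every zero-crossing point and keeps the smallest value, the selected point attains a value no larger than $l_\phi(\widehat{\Vec{y}}(t^\star)) < l_\phi(\widehat{\Vec{y}}_c)$, which is the claimed strict reduction.

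The main obstacle I anticipate is not any single calculation but the careful bookkeeping of the sign/orthant logic: one must verify that the entering iterate is genuinely sign-consistent with $\widehat{\Vec{\theta}}$ (so that $l_\phi$ and $\widetilde{f}$ agree at the start) and handle the degenerate case in which $\widehat{\Vec{y}}_{\mathrm{new}}$ already lies in the same orthant ($t^\star = 1$, no crossing), where the search simply returns $\widehat{\Vec{y}}_{\mathrm{new}}$. Beyond this, the only place where kernelization enters is the strict convexity of $\widetilde{f}$, which I would secure purely from the positive definiteness of $\widehat{\Mat{\mathbb{K}}}$, so that the remainder of the Euclidean proof transfers essentially verbatim.
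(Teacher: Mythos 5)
Your proposal is correct and follows essentially the same route as the paper's proof, which is the standard Lee~\etal feature-sign convergence argument: the surrogate $\widetilde{f}$ strictly decreases at $\widehat{\Vec{y}}_{\mathrm{new}}$, and either the new point is sign-consistent or convexity along the segment to the first zero-crossing yields the decrease. You are in fact more careful than the paper on one point it leaves implicit --- the comparison $l_\phi \ge \widetilde{f}$ with equality on the orthant consistent with $\widehat{\Vec{\theta}}$, which is what converts a decrease of the surrogate into a decrease of the true objective --- but this is a refinement of the same argument rather than a different one.
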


\begin{proof}
	Let $\Vec{\widehat{y}}_c = [ ~\widehat{y}_{c,1}, ~\widehat{y}_{c,1}, ~\cdots, ~\widehat{y}_{c,N}]^T$
	be the current sparse codes corresponding to the given active set.
	We note that this step is invoked whenever optimality condition (a), described in Step~3, is not satisfied,
	\ie, there exists (at least) one feature in the current solution that is not consistent
	and $\Vec{\widehat{y}}_c$ is not an optimum point for the quadratic function depicted in Eqn.~\eqref{eqn:Opt_Grass_sign}.

	As a result, we have
	\mbox{$\widetilde{f}(\widehat{\Vec{y}}_\mathit{new}) < \widetilde{f}(\widehat{\Vec{y}}_{c})$}.
	If { $\widehat{\Vec{y}}_\mathit{new}$} is consistent with the given active set and sign vector,
	then updating {$\widehat{\Vec{y}} = \widehat{\Vec{y}}_\mathit{new}$} strictly decreases the objective.
	Otherwise, let {$\hat{y}_{\alpha}$} be the first zero-crossing point
	(where any coefficient changes its sign)
	on a line segment from {$\hat{\Vec{y}}_{c}$} to {$\hat{\Vec{y}}_\mathit{new}$}.
	Since $\widetilde{f}(\cdot)$ is convex,
	$\widetilde{f}(\widehat{\Vec{y}}_{\alpha}) < \widetilde{f}(\widehat{\Vec{y}}_{c})$.
	Therefore, the discrete line search described in Step 3 ensures a decrease in the objective function.
\end{proof}

\begin{lemma}
	Each run of \emph{feature evaluation} in Algorithm~\ref{alg:pseudocode_Feature_Sign_Search} 
	is guaranteed to strictly reduce the objective function.
	\label{lemma:convergence2}
\end{lemma}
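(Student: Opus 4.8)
The plan is to mirror the classical Feature-Sign convergence argument and to reduce the claim to Lemma~\ref{lemma:convergence1}. The feature-evaluation step is invoked exactly when the optimality condition that governs the zero coefficients fails, that is, when there is a coordinate $j$ with $y_j = 0$ whose activation is warranted. Writing the objective of Eqn.~\eqref{eqn:kernel_sparse_coding} as $f(\Vec{y}) = L(\Vec{y}) + \lambda\|\Vec{y}\|_1$ with smooth part $L(\Vec{y}) = k(\Mat{X},\Mat{X}) - 2\Vec{y}^T\mathcal{K}(\Mat{X},\mathbb{D}) + \Vec{y}^T \mathbb{K}(\mathbb{D},\mathbb{D})\Vec{y}$, I would first record that this step selects the coordinate $j$ achieving $\big|\partial L/\partial y_j\big| > \lambda$ and assigns it the sign $\theta_j = -\mathrm{sign}\!\big(\partial L/\partial y_j\big)$, adding $j$ to the active set.

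The core step is to verify that this assignment yields a strict descent direction for $f$ at the current iterate. I would compute the one-sided directional derivative of $f$ at the current point (where $y_j=0$) along $\theta_j \Vec{e}_j$, which is
\begin{equation}
\theta_j \frac{\partial L}{\partial y_j} + \lambda = \lambda - \Big|\frac{\partial L}{\partial y_j}\Big| < 0 ,
\end{equation}
the inequality being strict precisely because the coordinate was selected with $\big|\partial L/\partial y_j\big| > \lambda$. Consequently, an infinitesimal move in direction $\theta_j \Vec{e}_j$ strictly decreases $f$, so the current coefficient vector, now regarded as a point over the enlarged active set, cannot be a minimizer of the restricted quadratic $\widetilde{f}$ of Eqn.~\eqref{eqn:Opt_Grass_sign}.

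Finally, I would close the loop using the feature-sign step. After activation the algorithm immediately performs a feature-sign step, which by Lemma~\ref{lemma:convergence1} strictly reduces $\widetilde{f}$ whenever it starts from a non-minimizing point consistent with the current active set and sign vector, exactly the situation established above. Since the sign vector is consistent with the coordinate signs both at the starting iterate (with $y_j=0$) and at the updated iterate, $\widetilde{f}$ coincides with the true objective $f$ at both endpoints, so the strict decrease of $\widetilde{f}$ transfers to a strict decrease of Eqn.~\eqref{eqn:kernel_sparse_coding}. The main obstacle is the bookkeeping that certifies this agreement between the quadratic surrogate $\widetilde{f}$ and the full objective $f$: the $\ell_1$ term is only locally linearized by $\widetilde{f}$, so I must check that the newly assigned sign keeps the active coordinates sign-consistent along the whole update, ensuring the surrogate's decrease is a genuine decrease of the true objective.
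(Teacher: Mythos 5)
Your proposal is correct and follows essentially the same route as the paper: both arguments use first-order information at the current iterate (the paper via a Taylor expansion of $\widetilde{f}$, you via the one-sided directional derivative $\lambda - |\partial L/\partial y_j| < 0$) to show the newly activated coordinate with sign $\theta_j$ is a strict descent direction, and both then hand off to Lemma~\ref{lemma:convergence1} to convert this into a strict decrease of the objective. Your version is somewhat more explicit about why the surrogate $\widetilde{f}$ and the true objective of Eqn.~\eqref{eqn:kernel_sparse_coding} agree at sign-consistent iterates, a bookkeeping point the paper leaves implicit.
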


\begin{proof}
	Let $y_{c,j}$ be the selected feature in step (2) of the algorithm. Such a feature exists since step (2) is invoked
	whenever the optimality condition (a) is not satisfied,
	\ie, there exists at least one zero coefficient ($y_{c,q}$) in $\Vec{\widehat{y}}_c$
	such that
	\mbox{$\left| \sum_{q=1}^{N}{y_qk(\Mat{D}_q,\Mat{D}_i)}-2k(\Mat{X},\Mat{D}_i) \right| > \lambda$}.
	Moreover, as the optimality condition (a) is satisfied,
	for all non-zero elements of \mbox{$y_{c,i} \in \Vec{\widehat{y}}_c$},
	we have
	\mbox{$\sum_{q=1}^{N}{y_qk(\Mat{D}_q,\Mat{D}_i)} - 2k(\Mat{X},\Mat{D}_i) + \lambda \operatorname{sign}(y_i) = 0$}.

	Let \mbox{$\Vec{\widehat{y}}_c=[~\widehat{y}_{c,1}, ~\widehat{y}_{c,1}, ~\cdots, ~\hat{y}_{c,N}]^T$}
	be the current sparse codes corresponding to the new active set (including $j$).
	We note that by adding {\small $y_{c,j}$} to the active set,
	the objective function {$\widetilde{f}(\cdot)$} is strictly decreased.
	This can be seen by considering a Taylor expansion of {$\tilde{f}(\cdot)$} around
	\mbox{$\Vec{\widehat{y}}=\Vec{\widehat{y}}_c$}.
	Because of optimality condition (a), the Taylor expansion has only one first order term in {$y_{c,j}$}
	and any direction that locally decreases {$\tilde{f}(\cdot)$}
	must be consistent with the sign of the activated {$y_{c,j}$}.
	Based on the previous lemma, and since the new solution has a lower objective value,
	the third step of feature sign algorithm ensures a decrease in the objective value.
\end{proof}

Having Lemmas~(\ref{lemma:convergence1}) and~(\ref{lemma:convergence2}) at our disposal,
we now show that a finite number of steps is required for convergence.
We first note that no solution can be repeated twice, as the objective value is strictly decreasing.
Moreover, the number of all possible active sets and coefficient signs is finite.
This ensures that the steps 2 to 4 of the algorithm cannot be repeated indefinitely.

\begin{algorithm}[!tb] \scriptsize
	\SetKw{KwGoTo}{go to}
	\SetKwBlock{KwInit}{Initialisation.}{}
	\SetKwBlock{KwProc}{Processing.}{}	
	\SetKwBlock{KwInvis}{}{}
	\setcounter{AlgoLine}{1}
	\LinesNotNumbered	
	\DontPrintSemicolon
	\KwIn
	{Query $\Mat{X} \in \SPD{n}$; the dictionary $\{ \Mat{D}_i \}_{i=1}^N,~\Mat{D}_i~\in~\SPD{n}$; 
	 Kernel function $k:\SPD{n} \times \SPD{n} \rightarrow \mathbb{R}$.	}
	\KwOut
	{The sparse code {$\Vec{y}$}.	}
	\BlankLine
	\KwInit{
	\mbox{$\mathit{active\_set} \gets \{\}$}, \mbox{$\Vec{y}_{N \times 1} \gets \Vec{0}$}, 
	\mbox{$\Vec{\theta}_{N \times 1} \gets \Vec{0}$}
	.\;
	}
	\KwProc{
		{\bf 1. Feature selection.}\;
		From the zero coefficients of {$\Vec{y}$}, 
		select \mbox{$i=\underset{i}{\arg \max}{ \left| 
		\sum\nolimits_{q=1}^{N}{y_qk(\Mat{D}_q,\Mat{D}_i)}-2k(\Mat{X},\Mat{D}_i) \right| }$}.\;
		{\bf 2. Feature evaluation.}\; 
		\If{$\left|\sum\nolimits_{q=1}^{N}{y_qk(\Mat{D}_q,\Mat{D}_i)}-2k(\Mat{X},\Mat{D}_i) \right| > \lambda$}
		{
		Add $i$ to the \mbox{$\mathit{active\_set}$}, \ie,
		\mbox{$\mathit{active\_set} \gets \mathit{active\_set} \cup{\{ i \}}$}\;
		Update the sign vector, \ie,
		\mbox{$\theta_i \mathrel{\mathop:}= -\operatorname{sign} 
		\left( \sum\nolimits_{q=1}^{N}{y_qk(\Mat{D}_q,\Mat{D}_i)}-2k(\Mat{X},\Mat{D}_i)-\lambda \right)$}
		}
		{\bf 3. Feature-sign step.}\;
		\KwInvis{
		$\widehat{\Vec{y}} \gets \Vec{y}(\mathit{active\_set})$,
		$\widehat{\Vec{\theta}} \gets \Vec{\theta}(\mathit{active\_set})$,
		$\widehat{\Vec{\mathcal{K}}} \gets \Vec{\mathcal{K}}(\mathit{active\_set})$\;		
		$\widehat{\Mat{\mathbb{K}}} \gets \Mat{\mathbb{K}}(\mathit{active\_set}, \mathit{active\_set})$\;	
		{$\widehat{\Vec{y}}_\mathit{new} \gets
		\widehat{\Mat{\mathbb{K}}}^{-1}(\widehat{\Vec{\mathcal{K}}}-\frac{\lambda}{2}\widehat{\Vec{\theta}})$},
		,\ie, the closed form solution to\;\Indp 
		\mbox{$\underset{\widehat{\Vec{y}}}{\min}
   		\left\| \phi(\Mat{X}) - \sum\limits_{i \in \mathit{active\_set}}{y_i\phi(\Mat{D}_i)} \right\|^2+
	    \lambda \widehat{\Vec{Y}} \widehat{\Vec{\theta}}$}.\;\Indm
	    \If{$\widehat{\Vec{y}}_\mathit{new}$ is consistent  with the current $\mathit{active\_set}$}
	    {
			$\widehat{\Vec{y}}=\widehat{\Vec{y}}_\mathit{new}$.
		}
		\Else(
		perform a discrete line search on the closed line segment from $\widehat{\Vec{y}}$ to $\widehat{\Vec{y}}_\mathit{new}$:
		)
		{
		Check the objective value at all points where any coefficient changes sign\;
		Update $\widehat{\Vec{y}}$ (and the corresponding entries in $\Vec{y}$) to the point with the lowest objective value.\;
		}
		Remove zero coefficients of $\widehat{\Vec{y}}$ from the $\mathit{active\_set}$\;
		\mbox{$\Vec{\theta} \gets \operatorname{sign}(\Vec{y})$}}		
		{\bf 4. Check optimality conditions.}\;
		\KwInvis{
		\If{
		$\exists  y_i \neq 0~~\rm{s.t.}~~
		\sum\nolimits_{q=1}^{N}{y_qk(\Mat{D}_q,\Mat{D}_i)}-2k(\Mat{X},\Mat{D}_i) + \lambda \operatorname{sign}(y_i)\neq 0$
		}
		{Jump to {\bf Feature selection.}}					 
		\ElseIf {
		$\exists y_i = 0~~\rm{s.t.}~~
		\left| \sum\nolimits_{q=1}^{N}{y_qk(\Mat{D}_q,\Mat{D}_i)}-2k(\Mat{X},\Mat{D}_i) \right| > \lambda$
		}
		{Jump to {\bf Feature-sign step.}}			
	}
	}
	\caption{\small Kernel feature-sign search, for finding sparse codes in reproducing kernel Hilbert spaces.}
	\label{alg:pseudocode_Feature_Sign_Search}
\end{algorithm}


\begin{thebibliography}{10}
\providecommand{\url}[1]{#1}
\csname url@samestyle\endcsname
\providecommand{\newblock}{\relax}
\providecommand{\bibinfo}[2]{#2}
\providecommand{\BIBentrySTDinterwordspacing}{\spaceskip=0pt\relax}
\providecommand{\BIBentryALTinterwordstretchfactor}{4}
\providecommand{\BIBentryALTinterwordspacing}{\spaceskip=\fontdimen2\font plus
\BIBentryALTinterwordstretchfactor\fontdimen3\font minus
  \fontdimen4\font\relax}
\providecommand{\BIBforeignlanguage}[2]{{%
\expandafter\ifx\csname l@#1\endcsname\relax
\typeout{** WARNING: IEEEtran.bst: No hyphenation pattern has been}%
\typeout{** loaded for the language `#1'. Using the pattern for}%
\typeout{** the default language instead.}%
\else
\language=\csname l@#1\endcsname
\fi
#2}}
\providecommand{\BIBdecl}{\relax}
\BIBdecl

\bibitem{ELAD_SR_BOOK_2010}
M.~Elad, \emph{Sparse and Redundant Representations: From Theory to
  Applications in Signal and Image Processing}.\hskip 1em plus 0.5em minus
  0.4em\relax Springer, 2010.

\bibitem{Olshausen_1996_Nature}
B.~A. Olshausen and D.~J. Field, ``Emergence of simple-cell receptive field
  properties by learning a sparse code for natural images,'' \emph{Nature},
  vol. 381, no. 6583, pp. 607--609, 1996.

\bibitem{Wright_2009_PAMI}
J.~Wright, A.~Yang, A.~Ganesh, S.~Sastry, and Y.~Ma, ``Robust face recognition
  via sparse representation,'' \emph{IEEE Transactions on Pattern Analysis and
  Machine Intelligence}, vol.~31, no.~2, pp. 210--227, 2009.

\bibitem{Tuzel_ECCV_2006}
O.~Tuzel, F.~Porikli, and P.~Meer, ``Region covariance: A fast descriptor for
  detection and classification,'' in \emph{Proc. European Conference on
  Computer Vision (ECCV)}.\hskip 1em plus 0.5em minus 0.4em\relax Springer,
  2006, pp. 589--600.

\bibitem{Harandi_ECCV_2012}
M.~T. Harandi, C.~Sanderson, R.~Hartley, and B.~C. Lovell, ``Sparse coding and
  dictionary learning for symmetric positive definite matrices: A kernel
  approach,'' in \emph{Proc. European Conference on Computer Vision
  (ECCV)}.\hskip 1em plus 0.5em minus 0.4em\relax Springer, 2012, pp. 216--229.

\bibitem{Sanin_WACV_2013}
A.~Sanin, C.~Sanderson, M.~Harandi, and B.~Lovell, ``Spatio-temporal covariance
  descriptors for action and gesture recognition,'' in \emph{IEEE Workshop on
  Applications of Computer Vision (WACV)}, 2013, pp. 103--110.

\bibitem{Pang_TCSVT_2008}
Y.~Pang, Y.~Yuan, and X.~Li, ``Gabor-based region covariance matrices for face
  recognition,'' \emph{IEEE Transactions on Circuits and Systems for Video
  Technology}, vol.~18, no.~7, pp. 989--993, 2008.

\bibitem{Pennec_IJCV_2006}
X.~Pennec, P.~Fillard, and N.~Ayache, ``A {R}iemannian framework for tensor
  computing,'' \emph{Int. Journal of Computer Vision (IJCV)}, vol.~66, no.~1,
  pp. 41--66, 2006.

\bibitem{Tuzel_PAMI_2008}
O.~Tuzel, F.~Porikli, and P.~Meer, ``Pedestrian detection via classification on
  {R}iemannian manifolds,'' \emph{IEEE Transactions on Pattern Analysis and
  Machine Intelligence}, vol.~30, no.~10, pp. 1713--1727, 2008.

\bibitem{Sadeep_CVPR_2013}
S.~Jayasumana, R.~Hartley, M.~Salzmann, H.~Li, and M.~Harandi, ``Kernel methods
  on the {R}iemannian manifold of symmetric positive definite matrices,'' in
  \emph{Proc. IEEE Conference on Computer Vision and Pattern Recognition
  (CVPR)}, June 2013.

\bibitem{Caseiro_ECCV_2012}
R.~Caseiro, J.~Henriques, P.~Martins, and J.~Batista, ``Semi-intrinsic mean
  shift on {R}iemannian manifolds,'' in \emph{Proc. European Conference on
  Computer Vision (ECCV)}.\hskip 1em plus 0.5em minus 0.4em\relax Springer,
  2012, pp. 342--355.

\bibitem{Shawe-Taylor2004book}
J.~Shawe-Taylor and N.~Cristianini, \emph{Kernel Methods for Pattern
  Analysis}.\hskip 1em plus 0.5em minus 0.4em\relax Cambridge University Press,
  2004.

\bibitem{Hein_2005}
M.~Hein and O.~Bousquet, ``Hilbertian metrics and positive definite kernels on
  probability measures,'' in \emph{Proc.\ Int.\ Conf.\ Artificial Intelligence
  \& Statistics}, 2005, pp. 136--143.

\bibitem{Guo_TIP13}
K.~Guo, P.~Ishwar, and J.~Konrad, ``Action recognition from video using feature
  covariance matrices,'' \emph{IEEE Transactions on Image Processing}, vol.~22,
  no.~6, pp. 2479--2494, 2013.

\bibitem{Sra_ECML_2011}
S.~Sra and A.~Cherian, ``Generalized dictionary learning for symmetric positive
  definite matrices with application to nearest neighbor retrieval,'' in
  \emph{Machine Learning and Knowledge Discovery in Databases}.\hskip 1em plus
  0.5em minus 0.4em\relax Springer, 2011, pp. 318--332.

\bibitem{Vemuri_ICML_2013}
J.~Ho, Y.~Xie, and B.~Vemuri, ``On a nonlinear generalization of sparse coding
  and dictionary learning,'' in \emph{Proc. Int. Conference on Machine Learning
  (ICML)}, 2013, pp. 1480--1488.

\bibitem{Yuan_ACCV_2010}
C.~Yuan, W.~Hu, X.~Li, S.~Maybank, and G.~Luo, ``Human action recognition under
  log-{E}uclidean {R}iemannian metric,'' in \emph{Proc. Asian Conference on
  Computer Vision (ACCV)}.\hskip 1em plus 0.5em minus 0.4em\relax Springer,
  2010, pp. 343--353.

\bibitem{Faraki_2015}
M.~Faraki, M.~Palhang, and C.~Sanderson, ``Log-{E}uclidean bag of words for
  human action recognition,'' \emph{IET Computer Vision}, (in press).

\bibitem{Lee_Manifold}
J.~M. Lee, \emph{Introduction to smooth manifolds}, ser. Graduate Texts in
  Mathematics.\hskip 1em plus 0.5em minus 0.4em\relax Springer, 2012, vol. 218.

\bibitem{Sivalingam_ICCV_2011}
R.~Sivalingam, D.~Boley, V.~Morellas, and N.~Papanikolopoulos, ``Positive
  definite dictionary learning for region covariances,'' in \emph{Proc. Int.
  Conference on Computer Vision (ICCV)}, 2011, pp. 1013--1019.

\bibitem{TSC_PAMI_2014}
------, ``Tensor sparse coding for positive definite matrices,'' \emph{IEEE
  Transactions on Pattern Analysis and Machine Intelligence}, vol.~36, no.~3,
  pp. 592--605, 2014.

\bibitem{Aharon_2006}
M.~Aharon, M.~Elad, and A.~Bruckstein, ``{K-SVD}: An algorithm for designing
  overcomplete dictionaries for sparse representation,'' \emph{IEEE
  Transactions on Signal Processing}, vol.~54, no.~11, pp. 4311--4322, 2006.

\bibitem{Kulis:2009:JMLR}
B.~Kulis, M.~A. Sustik, and I.~S. Dhillon, ``Low-rank kernel learning with
  {B}regman matrix divergences,'' \emph{Journal of Machine Learning Research
  (JMLR)}, vol.~10, pp. 341--376, 2009.

\bibitem{Cherian_PAMI12}
A.~Cherian, S.~Sra, A.~Banerjee, and N.~Papanikolopoulos, ``{J}ensen-{B}regman
  logdet divergence with application to efficient similarity search for
  covariance matrices,'' \emph{IEEE Transactions on Pattern Analysis and
  Machine Intelligence}, vol.~35, no.~9, pp. 2161--2174, 2013.

\bibitem{Sra_NIPS_2012}
S.~Sra, ``A new metric on the manifold of kernel matrices with application to
  matrix geometric means,'' in \emph{Proc. Advances in Neural Information
  Processing Systems (NIPS)}, 2012, pp. 144--152.

\bibitem{Vemuri_CVPR2004}
Z.~Wang and B.~C. Vemuri, ``An affine invariant tensor dissimilarity measure
  and its applications to tensor-valued image segmentation,'' in \emph{Proc.
  IEEE Conference on Computer Vision and Pattern Recognition (CVPR)}, 2004, pp.
  I--228.

\bibitem{Harandi_ECCV_2014}
M.~T. Harandi, M.~Salzmann, and R.~Hartley, ``From manifold to manifold:
  geometry-aware dimensionality reduction for {SPD} matrices,'' in \emph{Proc.
  European Conference on Computer Vision (ECCV)}.\hskip 1em plus 0.5em minus
  0.4em\relax Springer, 2014.

\bibitem{Berg:1984}
C.~Berg, J.~P.~R. Christensen, and P.~Ressel, \emph{Harmonic Analysis on
  Semigroups}.\hskip 1em plus 0.5em minus 0.4em\relax Springer, 1984.

\bibitem{Scholkopf:NIPS:2001}
B.~Sch\"{o}lkopf, ``The kernel trick for distances,'' in \emph{Proc. Advances
  in Neural Information Processing Systems (NIPS)}, 2001, pp. 301--307.

\bibitem{Moreno_2003}
P.~J. Moreno, P.~P. Ho, and N.~Vasconcelos, ``A {K}ullback-{L}eibler divergence
  based kernel for {SVM} classification in multimedia applications,'' in
  \emph{Proc. Advances in Neural Information Processing Systems (NIPS)}, 2003.

\bibitem{Similarity_JMLR_2009}
Y.~Chen, E.~K. Garcia, M.~R. Gupta, A.~Rahimi, and L.~Cazzanti,
  ``Similarity-based classification: Concepts and algorithms,'' \emph{Journal
  of Machine Learning Research (JMLR)}, vol.~10, pp. 747--776, 2009.

\bibitem{Harandi_ICCV_2013}
M.~Harandi, C.~Sanderson, C.~Shen, and B.~Lovell, ``Dictionary learning and
  sparse coding on {G}rassmann manifolds: An extrinsic solution,'' in
  \emph{Proc. Int. Conference on Computer Vision (ICCV)}, 2013, pp. 3120--3127.

\bibitem{NIPS2006_NG}
H.~Lee, A.~Battle, R.~Raina, and A.~Y. Ng, ``Efficient sparse coding
  algorithms,'' in \emph{Proc. Advances in Neural Information Processing
  Systems (NIPS)}, 2007, pp. 801--808.

\bibitem{Gao_ECCV_2010}
S.~Gao, I.~W.-H. Tsang, and L.-T. Chia, ``Kernel sparse representation for
  image classification and face recognition,'' in \emph{Proc. European
  Conference on Computer Vision (ECCV)}.\hskip 1em plus 0.5em minus 0.4em\relax
  Springer, 2010, pp. 1--14.

\bibitem{Nguyen_TIP_2013}
H.~Nguyen, V.~Patel, N.~Nasrabad, and R.~Chellappa, ``Design of non-linear
  kernel dictionaries for object recognition,'' \emph{IEEE Transactions on
  Image Processing}, (in press).

\bibitem{Bishop_2006}
C.~M. Bishop, \emph{Pattern Recognition and Machine Learning}.\hskip 1em plus
  0.5em minus 0.4em\relax Springer, 2006.

\bibitem{Dempster_1977}
A.~P. Dempster, N.~M. Laird, and D.~B. Rubin, ``Maximum likelihood from
  incomplete data via the em algorithm,'' \emph{Journal of the Royal
  Statistical Society. Series B (Methodological)}, pp. 1--38, 1977.

\bibitem{BHATIA_2007}
R.~Bhatia, \emph{Positive Definite Matrices}.\hskip 1em plus 0.5em minus
  0.4em\relax Princeton University Press, 2007.

\bibitem{Higham_1988}
N.~J. Higham, ``Computing a nearest symmetric positive semidefinite matrix,''
  \emph{Linear Algebra and its Applications}, vol. 103, pp. 103--118, 1988.

\bibitem{CVX}
M.~Grant and S.~Boyd, ``{CVX}: Matlab software for disciplined convex
  programming, version 2.0 beta,'' \url{http://cvxr.com/cvx}, Sep. 2012.

\bibitem{Chen_CVPR_2013_Blessing}
D.~Chen, X.~Cao, F.~Wen, and J.~Sun, ``Blessing of dimensionality:
  High-dimensional feature and its efficient compression for face
  verification,'' in \emph{Proc. IEEE Conference on Computer Vision and Pattern
  Recognition (CVPR)}, 2013, pp. 3025--3032.

\bibitem{HDM05_Doc}
M.~M{\"u}ller, T.~R{\"o}der, M.~Clausen, B.~Eberhardt, B.~Kr{\"u}ger, and
  A.~Weber, ``Documentation: Mocap database {HDM05},'' Universit{\"a}t Bonn,
  Tech. Rep. CG-2007-2, 2007.

\bibitem{Husse_IJCAI_2013}
M.~E. Hussein, M.~Torki, M.~A. Gowayyed, and M.~El-Saban, ``Human action
  recognition using a temporal hierarchy of covariance descriptors on {3D}
  joint locations,'' in \emph{Proc. Int. Joint Conference on Artificial
  Intelligence (IJCAI)}, 2013.

\bibitem{FERET_Dataset}
P.~J. Phillips, H.~Moon, S.~A. Rizvi, and P.~J. Rauss, ``The {FERET} evaluation
  methodology for face-recognition algorithms,'' \emph{IEEE Transactions on
  Pattern Analysis and Machine Intelligence}, vol.~22, no.~10, pp. 1090--1104,
  2000.

\bibitem{Yang_ECCV_2010_GSRC}
M.~Yang and L.~Zhang, ``Gabor feature based sparse representation for face
  recognition with {G}abor occlusion dictionary,'' in \emph{Proc. European
  Conference on Computer Vision (ECCV)}.\hskip 1em plus 0.5em minus 0.4em\relax
  Springer, 2010, pp. 448--461.

\bibitem{Flickr_Dataset}
L.~Sharan, R.~Rosenholtz, and E.~Adelson, ``Material perception: What can you
  see in a brief glance?'' \emph{Journal of Vision}, vol.~9, no.~8, pp.
  784--784, 2009.

\bibitem{SIFT_IJCV_2004}
D.~G. Lowe, ``Distinctive image features from scale-invariant keypoints,''
  \emph{Int. Journal of Computer Vision (IJCV)}, vol.~60, no.~2, pp. 91--110,
  2004.

\bibitem{UIUC_Dataset}
Z.~Liao, J.~Rock, Y.~Wang, and D.~Forsyth, ``Non-parametric filtering for
  geometric detail extraction and material representation,'' in \emph{Proc.
  IEEE Conference on Computer Vision and Pattern Recognition (CVPR)}, 2013, pp.
  963--970.

\bibitem{FMD_CVPR_2010}
C.~Liu, L.~Sharan, E.~H. Adelson, and R.~Rosenholtz, ``Exploring features in a
  bayesian framework for material recognition,'' in \emph{Proc. IEEE Conference
  on Computer Vision and Pattern Recognition (CVPR)}, 2010, pp. 239--246.

\bibitem{Varma_2009_PAMI}
M.~Varma and A.~Zisserman, ``A statistical approach to material classification
  using image patch exemplars,'' \emph{IEEE Transactions on Pattern Analysis
  and Machine Intelligence}, vol.~31, no.~11, pp. 2032--2047, 2009.

\bibitem{Schwartz_ETHZ}
W.~R. Schwartz and L.~S. Davis, ``Learning discriminative appearance-based
  models using partial least squares,'' in \emph{Brazilian Symposium on
  Computer Graphics and Image Processing}, 2009, pp. 322--329.

\bibitem{ETHZ_ICCV}
A.~Ess, B.~Leibe, and L.~V. Gool, ``Depth and appearance for mobile scene
  analysis,'' in \emph{Proc. Int. Conference on Computer Vision (ICCV)}, 2007,
  pp. 1--8.

\bibitem{Bazzani_CVIU13}
L.~Bazzani, M.~Cristani, and V.~Murino, ``Symmetry-driven accumulation of local
  features for human characterization and re-identification,'' \emph{Computer
  Vision and Image Understanding (CVIU)}, vol. 117, no.~2, pp. 130--144, 2013.

\bibitem{Harandi_WACV_2012}
M.~T. Harandi, C.~Sanderson, A.~Wiliem, and B.~C. Lovell, ``Kernel analysis
  over {R}iemannian manifolds for visual recognition of actions, pedestrians
  and textures,'' in \emph{IEEE Workshop on Applications of Computer Vision
  (WACV)}, 2012, pp. 433--439.

\bibitem{Brodatz_Dataset}
T.~Randen and J.~H. Hus{\o}y, ``Filtering for texture classification: A
  comparative study,'' \emph{IEEE Transactions on Pattern Analysis and Machine
  Intelligence}, vol.~21, no.~4, pp. 291--310, 1999.

\bibitem{UCF_DATASET}
M.~Rodriguez, J.~Ahmed, and M.~Shah, ``Action {MACH} a spatio-temporal maximum
  average correlation height filter for action recognition,'' in \emph{Proc.
  IEEE Conference on Computer Vision and Pattern Recognition (CVPR)}, 2008, pp.
  1--8.

\bibitem{HDN_CVPR_2010}
A.~Kovashka and K.~Grauman, ``Learning a hierarchy of discriminative space-time
  neighborhood features for human action recognition,'' in \emph{Proc. IEEE
  Conference on Computer Vision and Pattern Recognition (CVPR)}, 2010, pp.
  2046--2053.

\bibitem{AFMLK_CVPR_2011}
X.~Wu, D.~Xu, L.~Duan, and J.~Luo, ``Action recognition using context and
  appearance distribution features,'' in \emph{Proc. IEEE Conference on
  Computer Vision and Pattern Recognition (CVPR)}, 2011, pp. 489--496.

\bibitem{HOG3D:2009}
H.~Wang, M.~M. Ullah, A.~Kl{\"a}ser, I.~Laptev, and C.~Schmid, ``Evaluation of
  local spatio-temporal features for action recognition,'' in \emph{British
  Machine Vision Conference (BMVC)}, 2009.

\bibitem{Dalal_Triggs_2005}
N.~Dalal and W.~Triggs, ``Histograms of oriented gradients for human
  detection,'' in \emph{Proc. IEEE Conference on Computer Vision and Pattern
  Recognition (CVPR)}, 2005, pp. 886--893.

\bibitem{lee2010:thesis}
H.~Lee, ``Unsupervised feature learning via sparse hierarchical
  representations,'' Ph.D. dissertation, Stanford University, 2010.

\bibitem{Boyd:2004}
S.~Boyd and L.~Vandenberghe, \emph{Convex Optimization}.\hskip 1em plus 0.5em
  minus 0.4em\relax Cambridge University Press, 2004.

\end{thebibliography}
\end{document}